\documentclass{article} 

\usepackage[accepted]{icml2019}

\usepackage{dsfont}
\usepackage{amsfonts}
\usepackage{amsmath}
\usepackage{slashbox}

\usepackage{amsthm}

\usepackage{caption}
\usepackage{subcaption}
\usepackage{tabularx,booktabs}
\usepackage{slashbox}
\usepackage{booktabs,caption}
\usepackage[flushleft]{threeparttable}
\usepackage{multirow}
\usepackage{mathtools}
\usepackage{color}

\usepackage{amssymb}
\usepackage{soul}
\usepackage{cancel}
\usepackage{makecell}
\usepackage{multirow}
\usepackage{xcolor}

\usepackage{graphicx}
\usepackage{epstopdf}
\epstopdfsetup{update}

\usepackage{tabularx}
\newcolumntype{C}[1]{>{\centering\arraybackslash}p{#1}}

\usepackage[utf8]{inputenc} 
\usepackage[T1]{fontenc}    
\usepackage{hyperref}       
\usepackage{url}            
\usepackage{booktabs}       
\usepackage{amsfonts}       
\usepackage{nicefrac}       
\usepackage{microtype}      

\usepackage{float}
\usepackage{wrapfig}


\usepackage{hyperref}

\newcounter{equationset}
\newcommand{\equationset}[1]{
  \refstepcounter{equationset}
  \noindent\makebox[\linewidth]{Equation set~\theequationset: #1}}

%

\newtheorem{lemma}{Lemma}[section]



\begin{document}

\twocolumn[
\icmltitle{Exploring Interpretable LSTM Neural Networks over Multi-Variable Data}

\icmlsetsymbol{equal}{*}

\begin{icmlauthorlist}
\icmlauthor{Tian Guo}{eth}
\icmlauthor{Tao Lin}{epfl}
\icmlauthor{Nino Antulov-Fantulin}{eth}
\end{icmlauthorlist}

\icmlaffiliation{eth}{ETH, Z\"urich, Switzerland}
\icmlaffiliation{epfl}{EPFL, Switzerland}

\icmlcorrespondingauthor{Tian Guo}{tian.guo@gess.ethz.ch}

\icmlkeywords{Machine Learning, ICML}
\vskip 0.3in
]

\printAffiliationsAndNotice{} 

\begin{abstract}
For recurrent neural networks trained on time series with target and exogenous variables, in addition to accurate prediction, it is also desired to provide interpretable insights into the data. 
In this paper, we explore the structure of LSTM recurrent neural networks to learn variable-wise hidden states, with the aim to capture different dynamics in multi-variable time series and distinguish the contribution of variables to the prediction.
With these variable-wise hidden states, a mixture attention mechanism is proposed to model the generative process of the target. Then we develop associated training methods to jointly learn network parameters, variable and temporal importance w.r.t the prediction of the target variable.
Extensive experiments on real datasets demonstrate enhanced prediction performance by capturing the dynamics of different variables.
Meanwhile, we evaluate the interpretation results both qualitatively and quantitatively.
It exhibits the prospect as an end-to-end framework for both forecasting and knowledge extraction over multi-variable data.
\end{abstract}
\section{Introduction}

Recently, recurrent neural networks (RNNs), especially long short-term memory (LSTM)~\citep{hochreiter1997long} and gated recurrent units (GRU)~\citep{cho2014properties}, have been proven to be powerful sequence modeling tools in various tasks e.g. language modelling, machine translation, health informatics, time series, and speech~\citep{ke2018focused, lin2017hybrid, guo2016robust, lipton2015learning, sutskever2014sequence, bahdanau2014neural}.
In this paper, we focus on RNNs over multi-variable time series consisting of target and exogenous variables.
RNNs trained over such multi-variable data capture nonlinear correlation of historical values of target and exogenous variables to the future target values. 

In addition to forecasting, interpretable RNNs are desirable for gaining insights into the important part of data for RNNs achieving good prediction performance~\citep{hu2018listening, foerster2017input, lipton2016mythos}.
In this paper, we focus on two types of importance interpretation: variable importance and variable-wise temporal importance.
First, in RNNs variables differ in predictive power on the target, thereby contributing differently to the prediction~\citep{feng2018nonparametric, riemer2016correcting}.
Second, variables also present different temporal relevance to the target one ~\citep{kirchgassner2012introduction}.
For instance, for a variable instantaneously correlated to the target, its short historical data contributes more to the prediction. 
The ability to acquire this knowledge enables additional applications, e.g. variable selection.

However, current RNNs fall short of the aforementioned interpretability for multi-variable data due to their opaque hidden states.
Specifically, when fed with the multi-variable observations of the target and exogenous variables, RNNs blindly blend the information of all variables into the hidden states used for prediction.
It is intractable to distinguish the contribution of individual variables into the prediction through the sequence of hidden states~\citep{zhang2017stock}. 

Meanwhile, individual variables typically present different dynamics. 
This information is implicitly neglected by the hidden states mixing multi-variable data, thereby potentially hindering the prediction performance.

Existing works aiming to enhance the interpretability of recurrent neural networks rarely touch the internal structure of RNNs to overcome the opacity of hidden states on multi-variable data. 
They still fall short of aforementioned two types of interpretation~\citep{montavon2018methods, foerster2017input, che2016interpretable}. 
One category of the approaches is to perform post-analyzing on trained RNNs by perturbation on training data or gradient based methods~\citep{ancona2018towards, ribeiro2018anchors, lundberg2017unified, shrikumar2017learning}.
Another category is to build attention mechanism on hidden states of RNNs to characterize the importance of different time steps~\citep{qin2017dual, choi2016retain}.



In this paper we aim to achieve a unified framework of accurate forecasting and importance interpretation.
In particular, the contribution is fourfold:
\begin{itemize}
\item We explore the structure of LSTM to enable variable-wise hidden states capturing individual variable's dynamics. 
It facilitates the prediction and interpretation.
This family of LSTM is referred to as \textbf{I}nterpretable \textbf{M}ulti-\textbf{V}ariable LSTM, i.e. IMV-LSTM.
\item A novel mixture attention mechanism is designed to summarize variable-wise hidden states and model the generative process of the target.
\item We develop a training method based on probabilistic mixture attention to learn network parameter, variable and temporal importance measures simultaneously.
\item Extensive experimental evaluation of IMV-LSTM against statistical, machine learning and deep learning based baselines demonstrate the superior prediction performance and interpretability of IMV-LSTM.
The idea of IMV-LSTM easily applies to other RNN structures, e.g. GRU and stacked recurrent layers. 
\end{itemize}

\section{Related Work}


Recent research on the interpretable RNNs can be categorized into two groups: attention methods and post-analyzing on trained models.
Attention mechanism has gained tremendous popularity~\citep{xu2018raim, choi2018fine, guo2018interpretable, lai2017modeling, qin2017dual, cinar2017position, choi2016retain, vinyals2015pointer, bahdanau2014neural}.
However, current attention mechanism is mainly applied to hidden states across time steps.
\citet{qin2017dual, choi2016retain} built attention on conventional hidden states of encoder networks. 
Since the hidden states encode information from all input variables, the derived attention is biased when used to measure the importance of corresponding variables. 
The contribution coefficients defined on attention values is biased as well \citep{choi2016retain}. 
Moreover, weighting input data by attentions \citep{xu2018raim, qin2017dual, choi2016retain} does not consider the direction of correlation with the target, which could impair the prediction performance.
Current attention based methods seldom provide variable-wise temporal interpretability.

As for post-analyzing interpretation,~\citet{murdoch2018beyond, murdoch2017automatic, arras2017explaining} extracted temporal importance scores over words or phrases of individual sequences by decomposing the memory cells of trained RNNs.
In perturbation-based approaches perturbed samples might be different from the original data distribution~\citep{ribeiro2018anchors}.
Gradient-based methods analyze the features that output was most sensitive to~\citep{ancona2018towards, shrikumar2017learning}.
Above methods mostly focused on one type of importance and are computationally inefficient. They rarely enhance the predicting performance.

\citet{wavelet} focused on the importance of each middle layer to the output.
\citet{chu2018exact} proposed interpreting solutions for piece-wise linear neural networks.
\citet{foerster2017input} introduced input-switched linear affine transformations into RNNs to analyze the contribution of input steps, wihch could lead to the loss in prediction performance.
Our paper focuses on exploring the internal structure of LSTM so as to learn accurate forecasting and importance measures simultaneously.

Another line of related research is about tensorization and decomposition of hidden states in RNNs. 
\citet{do2017matrix, novikov2015tensorizing} proposed to represent hidden states as matrices. 
\citet{he2017wider} developed tensorized LSTM to enhance the capacity of networks without additional parameters.
\citet{kuchaiev2017factorization, neil2016phased, koutnik2014clockwork} proposed to partition the hidden layer into separated modules with different updates. 
These hidden state tensors and update processes 
do not maintain variable-wise correspondence and lack the desirable interpretability.

\section{Interpretable Multi-Variable LSTM}\label{sec:mv}
In the following we will first explore the internal structure of LSTM to enable hidden states to encode individual variables, such that the contribution from individual variables to the prediction can be distinguished.
Then, mixture attention is designed to summarize these variable-wise hidden states for predicting. 
The described method can be easily extended to multi-step ahead prediction via iterative methods as well as vector regression~\citep{deepmulti2018, cheng2006multistep}.
\begin{figure*}[htbp!]
\vskip 0.2in
\begin{center}
\centerline{\includegraphics[width = 0.9\textwidth]{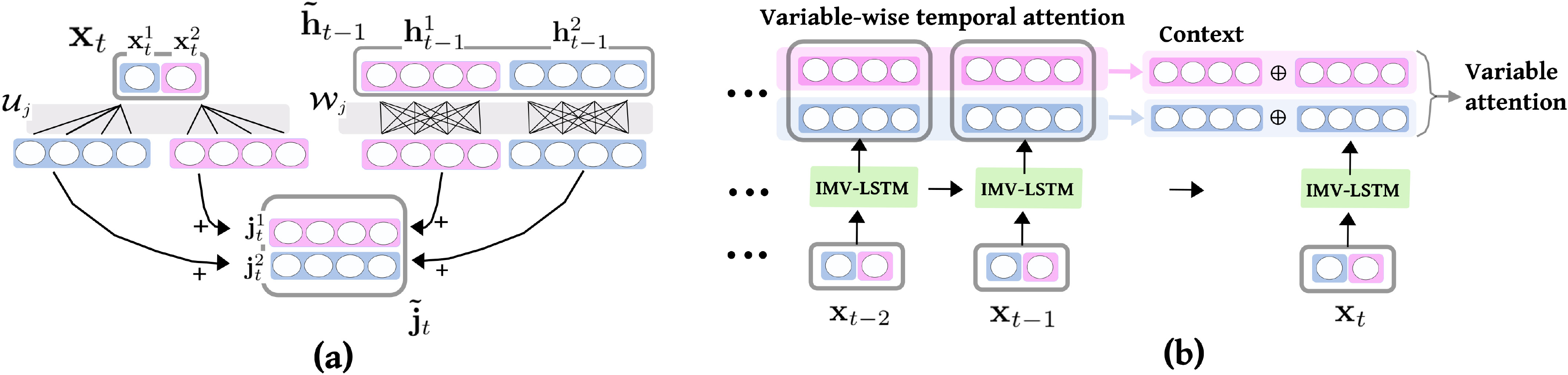}}
\caption{A toy example of a IMV-LSTM with a two-variable input sequence and the hidden matrix of  $4$-dimensions per variable.
Circles represent one dimensional elements.
Purple and blue colors correspond to two variables. 
Blocks containing rectangles with circles inside represent input data and hidden matrix.
Panel (a) exhibits the derivation of hidden update $\mathbf{\tilde{j}}_t$. 
Grey areas represent transition weights. 
Panel (b) demonstrates the mixture attention process. (best viewed in color)}
\label{fig:graph}
\end{center}
\vskip -0.2in
\end{figure*}


Assume we have $N$-$1$ exogenous time series and a target series $\mathbf{y}$ of length $T$, where $\mathbf{y} = [ y_1, \cdots, y_T ]$ and $\mathbf{y} \in \mathbb{R}^T$. 
{Vectors are assumed to be in column form throughout this paper.}
By stacking exogenous time series and target series, we define a multi-variable input series as
$\mathbf{X}_{T} = \{ \mathbf{x}_1, \cdots, \mathbf{x}_T  \}$, 
where $\mathbf{x}_t = [ \mathbf{x}_{t}^1, \cdots, \mathbf{x}_{t}^{N-1}, {y}_{t} ]$.
Both of $\mathbf{x}_{t}^{n}$ and ${y}_{t}$ can be multi-dimensional vector.
$\mathbf{x}_t \in \mathbb{R}^N$ is the multi-variable input at time step $t$.
It is also free for $\mathbf{X}_T$ to merely include exogenous variables, which does not affect the methods presented below. 

Given $\mathbf{X}_T$, we aim to learn a non-linear mapping to predict the next values of the target series, namely $\hat{y}_{T+1} = \mathcal{F}(\mathbf{X}_T)$.

Meanwhile, the other desirable byproduct of learning $\mathcal{F}(\mathbf{X}_T)$ is the variable and temporal importance measures.
Mathematically, we aim to derive variable importance vector $\mathbf{I} \in R^{N}_{\geq 0}$, $\sum_{n=1}^N \text{I}_{n} = 1$ and variable-wise temporal importance vector $\mathbf{T}^n \in R^{T-1}_{\geq 0}$ (w.r.t. variable $n$), $\sum_{k = 1}^{T-1} \text{T}^n_{k} = 1$.
Elements of these vectors are normalized (i.e. sum to one) and reflect the relative importance of the corresponding variable or time instant w.r.t. the prediction.

\subsection{Network Architecture}
The idea of IMV-LSTM is to make use of hidden state matrix and to develop associated update scheme, such that each element (e.g. row) of the hidden matrix encapsulates information exclusively from a certain variable of the input.

To distinguish from the hidden state and gate vectors in a standard LSTM, hidden state and gate matrices in IMV-LSTM are denoted with tildes.
Specifically, we define the hidden state matrix at time step $t$ as $\mathbf{\tilde{h}}_t = [ \mathbf{h}_t^{1} \, , \cdots, \, \mathbf{h}_t^{N} ]^\top$, where $\mathbf{\tilde{h}}_t \in \mathbb{R}^{N \times d}$, $\mathbf{h}_t^{n} \in \mathbb{R}^{d}$.
The overall size of the layer is derived as $D = N \cdot d$. 
The element $\mathbf{h}_t^{n}$ of $\mathbf{\tilde{h}}_t$ is the hidden state vector specific to $n$-th input variable.

Then, we define the input-to-hidden transition as $\boldsymbol{\mathcal{U}}_j = [\mathbf{U}^{1}_j \, , \cdots, \, \mathbf{U}^{N}_j ]^\top$, 
where $\boldsymbol{\mathcal{U}}_j \in \mathbb{R}^{N \times d \times d_0 }$, $\mathbf{U}^{n}_j \in \mathbb{R}^{d \times d_0} $ and $d_0$ is the dimension of individual variables at each time step.
The hidden-to-hidden transition is defined as: 
$\boldsymbol{\mathcal{W}}_j = [ \mathbf{W}_j^{1} \, \cdots \, \mathbf{W}_j^{N} ]$, 
where $\boldsymbol{\mathcal{W}}_j \in \mathbb{R}^{N \times d \times d}$ and 
$\mathbf{W}_j^{n} \in \mathbb{R}^{d \times d}$.

As standard LSTM neural networks~\citep{hochreiter1997long}, IMV-LSTM has the input $\mathbf{i}_t$, forget $\mathbf{f}_t$, output gates $\mathbf{o}_t$ and the memory cells $\mathbf{c}_t$ in the update process.
Given the newly incoming input $\mathbf{x}_t$ at time $t$ and the hidden state matrix $\mathbf{\tilde{h}}_{t-1}$, the hidden state update is defined as:
\begin{equation}
\mathbf{\tilde{j}}_t = \tanh \left( \boldsymbol{\mathcal{W}}_j \circledast \mathbf{\tilde{h}}_{t-1} + \boldsymbol{\mathcal{U}}_j \circledast \mathbf{x}_t + \mathbf{b}_j \right) \,,  \label{eq:j}
\end{equation}
where $\mathbf{\tilde{j}}_t = [ \, \mathbf{j}_t^1, \cdots, \mathbf{j}_t^N \, ]^{\top}$ has the same shape as hidden state matrix $\mathbb{R}^{N \times d}$. 
Each element $ \mathbf{j}_t^n \in \mathbb{R}^{d}$ corresponds to the update of the hidden state
w.r.t. input variable $n$. Term $\boldsymbol{\mathcal{W}}_j \circledast \mathbf{\tilde{h}}_{t-1}$ and $\boldsymbol{\mathcal{U}}_j \circledast \mathbf{x}_t $ respectively capture the update from the hidden states at the previous step and the new input.
The tensor-dot operation $\circledast$ is defined as the product of two tensors along the $N$ axis,
e.g., $\boldsymbol{\mathcal{W}}_j \circledast \mathbf{\tilde{h}}_{t-1} = [ \mathbf{W}_j^1 \mathbf{h}_{t-1}^1 \, , \cdots, \, \mathbf{W}_j^N \mathbf{h}_{t-1}^N ]^\top $ where $ \mathbf{W}_j^n \mathbf{h}_{t-1}^n  \in \mathbb{R}^{d}$.


Depending on different update schemes of gates and memory cells,  we proposed two realizations of IMV-LSTM, i.e. IMV-Full in Equation set \ref{eqset:1} and IMV-Tensor in Equation set \ref{eqset:2}.
In these two sets of equations, $\text{vec}(\cdot)$ refers to the vectorization operation, which concatenates columns of a matrix into a vector.
The concatenation operation is denoted by $\oplus$ and element-wise multiplication is denoted by $\odot$.
Operator $\text{matricization}(\cdot)$ reshapes a vector of $\mathbb{R}^D$ into a matrix of $\mathbb{R}^{N \times d}$.
\begin{minipage}{0.47\textwidth}
\begin{align}
&\begin{bmatrix}
        \mathbf{i}_t \\
        \mathbf{f}_t \\
        \mathbf{o}_t 
        \end{bmatrix} =  \sigma \left( \mathbf{W} \text{ } [\mathbf{x}_t \oplus \text{vec}(\text{ } \mathbf{\tilde{h}}_{t-1})] + \mathbf{b} \right) \label{eq:full_gate}\\
&\text{ }\mathbf{c}_t = \mathbf{f}_t \odot \mathbf{c}_{t-1} + \mathbf{i}_t \odot \text{vec}( \text{ }\mathbf{\tilde{j}}_t) \label{eq:full_c}\\
&\text{ }\mathbf{\tilde{h}}_t = \text{matricization}( \mathbf{o}_t \odot \tanh(\mathbf{c}_t) ) \label{eq:full_h}
\end{align}
\equationset{IMV-Full}\label{eqset:1}
\end{minipage}%
\hfill
\begin{minipage}{0.47\textwidth}
\begin{align}
&\begin{bmatrix}
        \mathbf{\tilde{i}}_t \\
        \mathbf{\tilde{f}}_t \\
        \mathbf{\tilde{o}}_t 
        \end{bmatrix} =  \sigma \left( \boldsymbol{\mathcal{W}} \circledast \mathbf{\tilde{h}}_{t-1} + \boldsymbol{\mathcal{U}} \circledast \mathbf{x}_t + \mathbf{b} \right) \label{eq:tensor_gate}\\
&\text{ }\mathbf{\tilde{c}}_t = \mathbf{\tilde{f}}_t \odot \mathbf{\tilde{c}}_{t-1} + \mathbf{\tilde{i}}_t \odot \mathbf{\tilde{j}}_t \label{eq:tensor_c}\\
&\text{ }\mathbf{\tilde{h}}_t = \mathbf{\tilde{o}}_t \odot \tanh(\mathbf{\tilde{c}}_t) \label{eq:tensor_h}
\end{align}
\equationset{IMV-Tensor}\label{eqset:2}
\end{minipage}

\textbf{IMV-Full}: 
With vectorization in Eq.~\eqref{eq:full_gate} and \eqref{eq:full_c}, IMV-Full updates gates and memories using full $\mathbf{\tilde{h}_{t-1}}$ and $\mathbf{\tilde{j}_t}$ regardless of the variable-wise data in them. 
By simple replacement of the hidden update in standard LSTM by $\mathbf{\tilde{j}}_t$, IMV-Full behaves identically to standard LSTM while enjoying the interpretability shown below.

\textbf{IMV-Tensor}:
By applying tensor-dot operations in Eq.~\eqref{eq:tensor_gate}, gates and memory cells are matrices as well, elements of which have the correspondence to input variables as hidden state matrix $\mathbf{\tilde{h}}_t$ does.
$\boldsymbol{\mathcal{W}}$ and $\boldsymbol{\mathcal{U}}$ have the same shapes as $\boldsymbol{\mathcal{W}_j}$ and $\boldsymbol{\mathcal{U}_j}$ in Eq.~\eqref{eq:j}.

In IMV-Full and IMV-Tensor, gates only scale $\mathbf{\tilde{j}_t}$ and $\mathbf{\tilde{c}}_{t-1}$ and thus retain the variable-wise data organization in $\mathbf{\tilde{h}}_t$. 
Meanwhile, based on tensorized hidden state Eq.~\eqref{eq:j} and gate update Eq.~\eqref{eq:tensor_gate}, IMV-Tensor can also be considered as a set of parallel LSTMs, each of which processes one variable series and then merges via the mixture. 
The derived hidden states specific to each variable are aggregated on both temporal and variable level through the mixture attention.

Next, we provide the analysis about the complexity of IMV-LSTM through Lemma~\ref{lemma:size} and Lemma~\ref{lemma:time}.

\begin{lemma}\label{lemma:size}
	Given time series of $N$ variables, assume a standard LSTM and IMV-LSTM layer both have size $D$, i.e. $D$ neurons in the layer.
	Then, compared to the number of parameters of the standard LSTM, IMV-Full and IMV-Tensor respectively reduce the network complexity by $(N-1)D + (1-1/N)D \cdot D$ and $4(N-1)D + 4(1-1/N)D \cdot D$ number of parameters. 
\end{lemma}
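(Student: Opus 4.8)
The plan is purely a parameter-counting argument: I fix the convention that each of the $N$ variables is scalar ($d_0 = 1$), so the per-variable hidden dimension is $d = D/N$, write down the parameter budget of a standard LSTM layer of size $D$ and of each IMV variant, and subtract block by block.

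First I would recall that a standard LSTM of size $D$ with $N$-dimensional input has four structurally identical components --- the input, forget and output gates and the cell-candidate --- each carrying an input-to-hidden matrix in $\mathbb{R}^{D\times N}$, a hidden-to-hidden matrix in $\mathbb{R}^{D\times D}$ and a bias in $\mathbb{R}^{D}$, for a total of $4(DN + D^2 + D)$ parameters. Next I would count the tensorized hidden update of Eq.~\eqref{eq:j}, which is the single building block shared by both variants: $\boldsymbol{\mathcal{U}}_j\in\mathbb{R}^{N\times d\times d_0}$ has $Nd\,d_0 = Nd = D$ entries, $\boldsymbol{\mathcal{W}}_j\in\mathbb{R}^{N\times d\times d}$ has $Nd^2 = (D/N)\,D = D^2/N$ entries, and the bias adds $D$, so this block costs $D + D^2/N + D$ parameters against $DN + D^2 + D$ for the corresponding block of a standard LSTM. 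The biases cancel, leaving a per-block saving of $(N-1)D + (1-1/N)D\cdot D$.

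For IMV-Full it then remains to observe that the gate equation \eqref{eq:full_gate} maps $[\mathbf{x}_t\oplus\text{vec}(\tilde{\mathbf{h}}_{t-1})]\in\mathbb{R}^{N+D}$ to a vector of $3D$ gate values, hence uses $3D(N+D)+3D$ parameters, which is exactly the budget of the three gates of a standard LSTM, and that Eq.~\eqref{eq:full_c} together with the vectorization/matricization reshapes introduce no parameters; therefore the only parametric change is the replacement of the cell-candidate block, giving the stated saving of $(N-1)D + (1-1/N)D\cdot D$. For IMV-Tensor, Eq.~\eqref{eq:tensor_gate} applies the same tensorized form as Eq.~\eqref{eq:j} to each of $\tilde{\mathbf{i}}_t,\tilde{\mathbf{f}}_t,\tilde{\mathbf{o}}_t$ in addition to $\tilde{\mathbf{j}}_t$, so the per-block saving is incurred four times, which yields $4(N-1)D + 4(1-1/N)D\cdot D$, and again the memory and output updates contribute nothing.

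The argument involves no real difficulty; the only points that need care are the bookkeeping of which sub-blocks actually differ --- in particular that the gate block of IMV-Full is parameter-for-parameter identical to the gates of a standard LSTM, so only one of the four blocks changes, whereas in IMV-Tensor all four do --- and the substitution $d = D/N$ that converts $Nd^2$ and $Nd\,d_0$ into $D^2/N$ and $D$. Once these are in place the two claimed reductions follow by direct subtraction.
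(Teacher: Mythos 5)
Your proposal is correct and follows essentially the same route as the paper's proof: count the standard LSTM block as $D^2+ND+D$ parameters, the tensorized block of Eq.~\eqref{eq:j} as $D^2/N+2D$ (using $d_0=1$, $d=D/N$), note that IMV-Full replaces only the cell-candidate block while its gates in Eq.~\eqref{eq:full_gate} are parameter-identical to the standard gates, and that IMV-Tensor tensorizes all four blocks, giving one versus four copies of the per-block saving. Your write-up is simply a more explicit version of the paper's terser bookkeeping.
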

\begin{proof}
	In a standard LSTM of layer size $D$, trainable parameters lie in the hidden and gate update functions.
	In total, these update functions have $4 D\cdot D + 4 N \cdot D + 4 D$ parameters, where $4 D\cdot D + 4 N \cdot D$ comes from the transition and $4D$ corresponds to the bias terms.	
	For IMV-Full, assume each input variable corresponds to one-dimensional time series. 
	Based on Eq.~\ref{eq:j}, the hidden update has $2D + D^2/N$ trainable parameters. 
	Equation set \ref{eqset:1} gives rise to the number of parameters equal to that of the standard LSTM.
	Therefore, the reduce number of parameters is $(N-1)D + (1-1/N)D \cdot D$.
	As for IMV-Tensor, more parameter reduction stems from that the gate update functions in Equation set \ref{eqset:2} make use of the tensor-dot operation as Eq.~\ref{eq:j}. 
\end{proof}

\begin{lemma}\label{lemma:time}
	For time series of $N$ variables and the recurrent layer of size $D$, IMV-Full and IMV-Tensor respectively have the computation complexity at each update step as: $\mathcal{O}(D^2 + N\cdot D)$ and $\mathcal{O}(D^2/N + D)$.
\end{lemma}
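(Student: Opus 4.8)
The plan is to bound the cost of each update equation separately — the shared hidden update in Eq.~\eqref{eq:j}, the gate update, and the element-wise memory/output combinations — then identify the dominant term and sum. Throughout I would write $d = D/N$ and, exactly as in the proof of Lemma~\ref{lemma:size}, assume each input variable is one-dimensional ($d_0 = 1$), so that $\mathbf{x}_t \in \mathbb{R}^N$ and $\text{vec}(\mathbf{\tilde{h}}_{t-1}) \in \mathbb{R}^D$.

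First I would analyze Eq.~\eqref{eq:j}, which is common to both variants. The tensor-dot $\boldsymbol{\mathcal{W}}_j \circledast \mathbf{\tilde{h}}_{t-1}$ is effectively a block-diagonal matrix--vector product: $N$ multiplications of a $d\times d$ block by a $d$-vector, costing $N d^2 = D^2/N$. The term $\boldsymbol{\mathcal{U}}_j \circledast \mathbf{x}_t$ is $N$ multiplications of a $d\times d_0$ block by a $d_0$-vector, costing $N d\, d_0 = D\,d_0 = \mathcal{O}(D)$; the bias addition, the $\tanh$, and writing out $\mathbf{\tilde{j}}_t$ are each $\mathcal{O}(D)$. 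Hence forming $\mathbf{\tilde{j}}_t$ costs $\mathcal{O}(D^2/N + D)$.

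For IMV-Full, the gate update Eq.~\eqref{eq:full_gate} multiplies the dense matrix $\mathbf{W}\in\mathbb{R}^{3D\times(N+D)}$ by $\mathbf{x}_t \oplus \text{vec}(\mathbf{\tilde{h}}_{t-1}) \in \mathbb{R}^{N+D}$, costing $3D(N+D) = \mathcal{O}(D^2 + ND)$, and the sigmoid is $\mathcal{O}(D)$. Equations~\eqref{eq:full_c} and \eqref{eq:full_h} act element-wise on $D$-dimensional vectors (matricization being just a reshape), so $\mathcal{O}(D)$ each. Since $D^2/N \le D^2$, summing the contributions gives per-step cost $\mathcal{O}(D^2 + ND)$. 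For IMV-Tensor, Eq.~\eqref{eq:tensor_gate} reuses the block-diagonal tensor-dot structure of Eq.~\eqref{eq:j} for three gate matrices, so by the first step it costs $\mathcal{O}(D^2/N + D)$; Equations~\eqref{eq:tensor_c} and \eqref{eq:tensor_h} are again element-wise on $N\times d = D$ entries, i.e. $\mathcal{O}(D)$. Combining with the cost of $\mathbf{\tilde{j}}_t$ yields $\mathcal{O}(D^2/N + D)$ per step.

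There is no deep obstacle here — the argument is routine flop counting — but the one place worth stating carefully is why the quadratic term differs between the two variants: in IMV-Full the gates are driven by a single dense $3D\times(N+D)$ multiplication, giving $D^2$, whereas in IMV-Tensor the gates are computed by $N$ independent $d\times d$ blocks, so the quadratic contribution shrinks to $N d^2 = D^2/N$. Making that block-diagonal-versus-dense distinction explicit, and noting that every remaining operation (bias, nonlinearity, the $\odot$ combinations, vectorization/matricization) is linear in $D$, completes the proof.
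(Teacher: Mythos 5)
Your proposal is correct and follows essentially the same route as the paper's proof: assume $d = D/N$ neurons per variable, count the tensor-dot cost of Eq.~\eqref{eq:j} as $N d^2 + N d = D^2/N + D$, charge the dense gate update of IMV-Full at $D^2 + ND$, and observe that IMV-Tensor's gates inherit the block-diagonal cost of Eq.~\eqref{eq:j}. Your version merely spells out the block-diagonal-versus-dense distinction and the $\mathcal{O}(D)$ element-wise operations more explicitly than the paper does.
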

\begin{proof}
Assume that $D$ neurons of the recurrent layer in IMV-Full and IMV-Tensor are evenly assigned to $N$ input variables, namely each input variable has $d = D/N$ corresponding neurons. 
For IMV-Full, based on Eq.~\ref{eq:j}, the hidden update has computation complexity $N \cdot d^2 + N \cdot d$, while the gate update process has the complexity $D^2 + N \cdot D$. Overall, the computation complexity is $\mathcal{O}(D^2 + N\cdot D)$, which is identical to the complexity of a standard LSTM.
As for IMV-Tensor, since the gate update functions in Equation set \ref{eqset:2} make use of the tensor-dot operation as Eq.~\ref{eq:j}, gate update functions have the same computation complexity as Eq.~\ref{eq:j}. The overall complexity is $\mathcal{O}(D^2/N + D)$, which is $1/N$ of the complexity of a standard LSTM. 
\end{proof}

Basically, Lemma~\ref{lemma:size} and Lemma~\ref{lemma:time} indicate that a high number of input variables leads to a large portion of parameter and computation reduction in IMV-LSTM family.

\subsection{Mixture Attention}
After feeding a sequence of $\{ \mathbf{x}_1, \cdots, \mathbf{x}_T \}$ into IMV-Full or IMV-Tensor, we obtain a sequence of hidden state matrices $\{ \mathbf{\tilde{h}}_1, \cdots, \mathbf{\tilde{h}}_{T} \}$, where the sequence of hidden states specific to variable $n$ is extracted as $\{ \mathbf{h}_1^n, \cdots, \mathbf{h}_T^n \}$. 

The idea of mixture attention mechanism as follows.
Temporal attention is first applied to the sequence of hidden states corresponding to each variable, so as to obtain the summarized history of each variable. 
Then by using the history enriched hidden state of each variable, variable attention is derived to merge variable-wise states. 
These two steps are assembled into a probabilistic mixture model~\citep{zong2018deep, graves2013generating, bishop1994mixture}, which facilitates the subsequent learning, predicting, and interpreting.  

In particular, the mixture attention is formulated as:
\begin{align}\label{eq:mix}
& p(y_{T+1} \, | \mathbf{X}_{T}) \nonumber  \\ 
& = \sum_{n=1}^{N} p(y_{T+1} | z_{T+1} = n, \mathbf{X}_{T} ) \cdot \Pr(z_{T+1} = n | \mathbf{X}_{T}) \nonumber \\
& =  \sum_{n=1}^{N} p(y_{T+1} \, | \, z_{T+1} = n, \mathbf{h}_1^n, \cdots, \mathbf{h}_T^n ) \nonumber \\ 
&\quad \cdot \Pr(z_{T+1} = n \, | \, \mathbf{\tilde{h}}_1, \cdots, \mathbf{\tilde{h}}_{T}) \nonumber \\
& =  \sum_{n=1}^{N} p( y_{T+1} \, | \, z_{T+1} = n, \underbrace{\mathbf{h}_T^n \oplus \mathbf{g}^n}_{ {\substack{\text{variable-wise} \\ \text{temporal attention}}}} ) \\ 
&\quad \cdot \underbrace{ \Pr(z_{T+1} = n \, | \, \mathbf{h}_T^1 \oplus \mathbf{g}^1, \cdots, \mathbf{h}_T^N \oplus \mathbf{g}^N )}_{\text{Variable attention}} \nonumber  
\end{align}
In Eq. \eqref{eq:mix}, we introduce a latent random variable $z_{T+1}$ into the the density function of $y_{T+1}$ to govern the generation process.
$z_{T+1}$ is a discrete variable over the set of values $\{ 1, \cdots, N \}$ corresponding to $N$ input variables.
Mathematically, $p( y_{T+1} \, | \, z_{T+1} = n, \mathbf{h}_T^n \oplus \mathbf{g}^n )$ characterizes the density of $y_{T+1}$ conditioned on historical data of variable $n$, while the prior of $z_{T+1}$, i.e. $\Pr(z_{T+1} = n \, | \, \mathbf{h}_T^1 \oplus \mathbf{g}^1, \cdots, \mathbf{h}_T^N \oplus \mathbf{g}^N )$ controls to what extent $y_{T+1}$ is driven by variable $n$.

Context vector $\mathbf{g}^n$ is computed as the temporal attention weighted sum of hidden states of variable $n$, i.e., $\mathbf{g}^n = \sum_t \alpha_t^n \mathbf{h}_t^n$. The attention weight $\alpha_t^n$ is evaluated as $\alpha_t^n = \frac{\exp{( \, \text{f}_n(\mathbf{h}_t^n)} \,)}{ \sum_k \exp{( \, \text{f}_n(\mathbf{h}_k^n)} \,) }$,
where $\text{f}_n(\cdot)$ can be a flexible function specific to variable $n$, e.g. neural networks.

For $p( y_{T+1} \, | \, z_{T+1} = n, \mathbf{h}_T^n \oplus \mathbf{g}^n )$, 
without loss of generality, we use a Gaussian output distribution parameterized by $[\, \mu_n, \sigma_n] = \varphi_n(\, \mathbf{h}_T^n \oplus \mathbf{g}^n )$, where $\varphi_n(\cdot)$ can be a feed-forward neural network.
It is free to use other distributions.


$\Pr(z_{T+1} = n \, | \, \mathbf{h}_T^1 \oplus \mathbf{g}^1, \cdots, \mathbf{h}_T^N \oplus \mathbf{g}^N )$ is derived by a softmax function over $\{ \text{f}( \, \mathbf{h}_T^n \oplus \mathbf{g}^n ) \}_N$, where $\text{f}(\cdot)$ can be a feedforward neural network shared by all variables. 


\subsection{Learning to Interpret and Predict}
In the learning phase, the set of parameters in the neural network and mixture attention is denoted by $\Theta$.
Given a set of $M$ training sequences $\{\mathbf{X}_{T}\}_M$ and $\{y_{T+1}\}_M$, we aim to learn both $\Theta$ and importance vectors $\mathbf{I}$ and $\{\mathbf{T}^n\}_N$ for prediction and insights into the data.

Next, we first illustrate the burden of directly interpreting attention values and then present the training method combining parameter and importance vector learning, without the need of post analyzing. 

Importance vectors $\mathbf{I}$ and $\{\mathbf{T}^n\}_N$ reflect the global relations in variables, while the attention values derived above are specific to data instances.
Moreover, it is nontrivial to decipher variable and temporal importance from attentions.

For instance, during the training on PLANT dataset used in the experiment section, we collect variable and variable-wise temporal attention values of training instances.
In Fig~\ref{fig:att_interpret}, left panels plots the histograms of variable attention of three variables in PLANT at two different epochs.
It is difficult to fully discriminate variable importance from these histograms.
Likewise, in the right panel, histograms of temporal attentions at certain time lags of variable ``P-temperature'' at two different epochs does not ease the importance interpretation.
Time lag represents the look-back time step w.r..t the current one.
Similar phenomena are observed in other variables and datasets during the experiments.

In the following, we develop the training procedure based on the Expectation–Maximization (EM) framework for the probabilistic model with latent variables, i.e. Eq.~\eqref{eq:mix} in this paper. 
Index $m$ in the following corresponds to the training data instance. 
It is omitted in $\mathbf{h}_T^n$ and $\mathbf{g}^n$ for simplicity.

The loss function to minimize is derived as:
\begin{equation}\label{eq:loss}
\textstyle
\begin{split}
& \mathcal{L}(\Theta, \mathbf{I} ) = \\
&\quad - \sum_{m=1}^{M}  \mathbb{E}_{q_m^n}[ \, \log p( y_{T+1, m} \, | \, z_{T+1, m} = n, \mathbf{h}_T^n \oplus \mathbf{g}^n ) ] \\
&\quad - \mathbb{E}_{q_m^n}[ \, \log \Pr(z_{T+1, m} = n \, | \, \mathbf{h}_T^1 \oplus \mathbf{g}^1, \cdots, \mathbf{h}_T^N \oplus \mathbf{g}^N )] \\
&\quad - \mathbb{E}_{q_m^n}[ \, \log \Pr(z_{T+1, m} = n \, | \,  \mathbf{I} )
]
\end{split}
\end{equation}
The desirable property of this loss function is as:
\begin{lemma}\label{lemma:bound}
The negative log-likelihood defined by Eq.~\eqref{eq:mix} is upper-bounded by the loss function Eq.~\eqref{eq:loss} in the EM process: 

\centering$-\log\prod\limits_{m} p(y_{T+1, m} \,|\mathbf{X}_{T,m} \,; \Theta) \leq \mathcal{L}(\Theta, \mathbf{I})$
\end{lemma}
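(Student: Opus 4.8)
The plan is to recognize Eq.~\eqref{eq:loss} as the standard EM variational bound (the negative of the so-called $Q$-function plus entropy terms), instantiated for the mixture model Eq.~\eqref{eq:mix}, and then apply Jensen's inequality. First I would fix a training instance $m$ and write the per-instance marginal likelihood by marginalizing out the latent variable: $p(y_{T+1,m}\mid\mathbf{X}_{T,m};\Theta) = \sum_{n=1}^{N} p(y_{T+1,m}\mid z_{T+1,m}=n,\mathbf{X}_{T,m})\Pr(z_{T+1,m}=n\mid\mathbf{X}_{T,m})$, exactly as in Eq.~\eqref{eq:mix}. I would then introduce, for each $m$, an arbitrary distribution $q_m^n$ over $n\in\{1,\dots,N\}$ (the responsibilities computed in the E-step), multiply and divide inside the sum by $q_m^n$, and invoke Jensen's inequality for the concave function $\log$:
\begin{equation*}
\log p(y_{T+1,m}\mid\mathbf{X}_{T,m};\Theta) \;\geq\; \sum_{n=1}^{N} q_m^n \log \frac{p(y_{T+1,m}\mid z_{T+1,m}=n,\mathbf{X}_{T,m})\Pr(z_{T+1,m}=n\mid\mathbf{X}_{T,m})}{q_m^n}.
\end{equation*}
Negating, summing over $m$, and expanding the joint $p(y_{T+1,m},z_{T+1,m}=n\mid\mathbf{X}_{T,m})$ into its three factors (the Gaussian emission conditioned on $\mathbf{h}_T^n\oplus\mathbf{g}^n$, the data-driven softmax prior on $z_{T+1,m}$, and the global prior $\Pr(z_{T+1,m}=n\mid\mathbf{I})$ that ties in the importance vector) gives precisely the three expectation terms of $\mathcal{L}(\Theta,\mathbf{I})$, plus an extra $\sum_m \mathbb{E}_{q_m^n}[\log q_m^n]$ entropy term.

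The one subtlety to address is that $\mathcal{L}(\Theta,\mathbf{I})$ as written in Eq.~\eqref{eq:loss} omits the $\sum_m \mathbb{E}_{q_m^n}[\log q_m^n]$ term. I would handle this by noting that $q_m^n$ is a probability distribution, so its negative entropy $\sum_n q_m^n\log q_m^n \leq 0$; hence dropping it only \emph{decreases} the right-hand side, and the inequality $-\log\prod_m p(y_{T+1,m}\mid\mathbf{X}_{T,m};\Theta) \leq \mathcal{L}(\Theta,\mathbf{I})$ is preserved a fortiori. (Alternatively, and more in the spirit of EM, one observes that at the E-step optimum $q_m^n = \Pr(z_{T+1,m}=n\mid y_{T+1,m},\mathbf{X}_{T,m})$ the bound is tight up to that entropy constant, but for the stated inequality the crude bound suffices.) I would also remark that the factorization $p(y_{T+1}\mid z_{T+1}=n,\mathbf{X}_T) = p(y_{T+1}\mid z_{T+1}=n,\mathbf{h}_T^n\oplus\mathbf{g}^n)$ and the analogous reduction for the prior are exactly the modeling assumptions made in Eq.~\eqref{eq:mix}, so no further justification is needed there.

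The main obstacle is essentially bookkeeping rather than mathematics: making sure the three conditional factors in the model decomposition line up one-to-one with the three expectation terms in Eq.~\eqref{eq:loss}, and being explicit about the sign and the direction in which the discarded entropy term pushes the inequality. Once Jensen is applied in the right place, the rest is routine rearrangement. I would close by stating that this is precisely the evidence lower bound (ELBO) underlying the EM updates, so minimizing $\mathcal{L}(\Theta,\mathbf{I})$ drives down an upper bound on the true negative log-likelihood, which is the claim of Lemma~\ref{lemma:bound}.
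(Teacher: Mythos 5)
Your overall strategy (Jensen's inequality on the marginal likelihood with an arbitrary responsibility distribution $q_m^n$, then discarding the nonpositive $\sum_n q_m^n\log q_m^n$ term) matches the skeleton of the paper's proof, but there is a genuine gap at the step you describe as ``expanding the joint into its three factors.'' The generative model of Eq.~\eqref{eq:mix} factorizes the joint $p(y_{T+1,m}, z_{T+1,m}=n\mid \mathbf{X}_{T,m})$ into exactly \emph{two} factors --- the emission $p(y_{T+1,m}\mid z_{T+1,m}=n,\mathbf{h}_T^n\oplus\mathbf{g}^n)$ and the softmax prior $\Pr(z_{T+1,m}=n\mid \mathbf{h}_T^1\oplus\mathbf{g}^1,\cdots,\mathbf{h}_T^N\oplus\mathbf{g}^N)$. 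The quantity $\Pr(z_{T+1,m}=n\mid\mathbf{I})$ is \emph{not} a factor of that joint: $\mathbf{I}$ parameterizes a separate categorical distribution that never appears in Eq.~\eqref{eq:mix}, and the paper itself describes the corresponding loss term as a regularizer added on top of the two terms ``derived from the standard EM procedure.'' So Jensen alone yields only the first two expectation terms of $\mathcal{L}(\Theta,\mathbf{I})$, and your bookkeeping never actually produces the third.

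The missing ingredient is a separate inequality that introduces $\mathbf{I}$. The paper supplies it via Gibbs' inequality: it splits the entropy term as $-\sum_n q^n\log q^n = \sum_n q^n\log q^n - 2\sum_n q^n\log q^n$, bounds $\sum_n q^n\log q^n \geq \sum_n q^n\log\Pr(z_{T+1}=n\mid\mathbf{I})$ (nonnegativity of the KL divergence between $q_m$ and the categorical distribution $\mathbf{I}$), and then discards the remaining $-2\sum_n q^n\log q^n\geq 0$. An even shorter patch for your route: since $\Pr(z_{T+1,m}=n\mid\mathbf{I})\leq 1$, the third term $-\mathbb{E}_{q_m^n}[\log\Pr(z_{T+1,m}=n\mid\mathbf{I})]$ is nonnegative, so once Jensen plus the sign of the entropy gives $-\log\prod_m p(y_{T+1,m}\mid\mathbf{X}_{T,m};\Theta)\leq -\sum_m(\text{first two terms})$, adding the third term only increases the right-hand side and the claimed bound follows a fortiori. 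Either fix is one line, but as written your proof asserts a factorization the model does not have. (A minor further point: when you say dropping $\sum_n q_m^n\log q_m^n$ ``decreases the right-hand side,'' on the negated, upper-bound side it in fact increases it --- which is what preserves the inequality; your conclusion is right but the stated direction is inverted.)
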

The proof is provided in the supplementary material. 

 \begin{figure}[htbp]
     \vspace{-0pt}
     \centering
     \begin{subfigure}[t]{0.23\textwidth}
         \centering
         \includegraphics[width=0.95\textwidth]{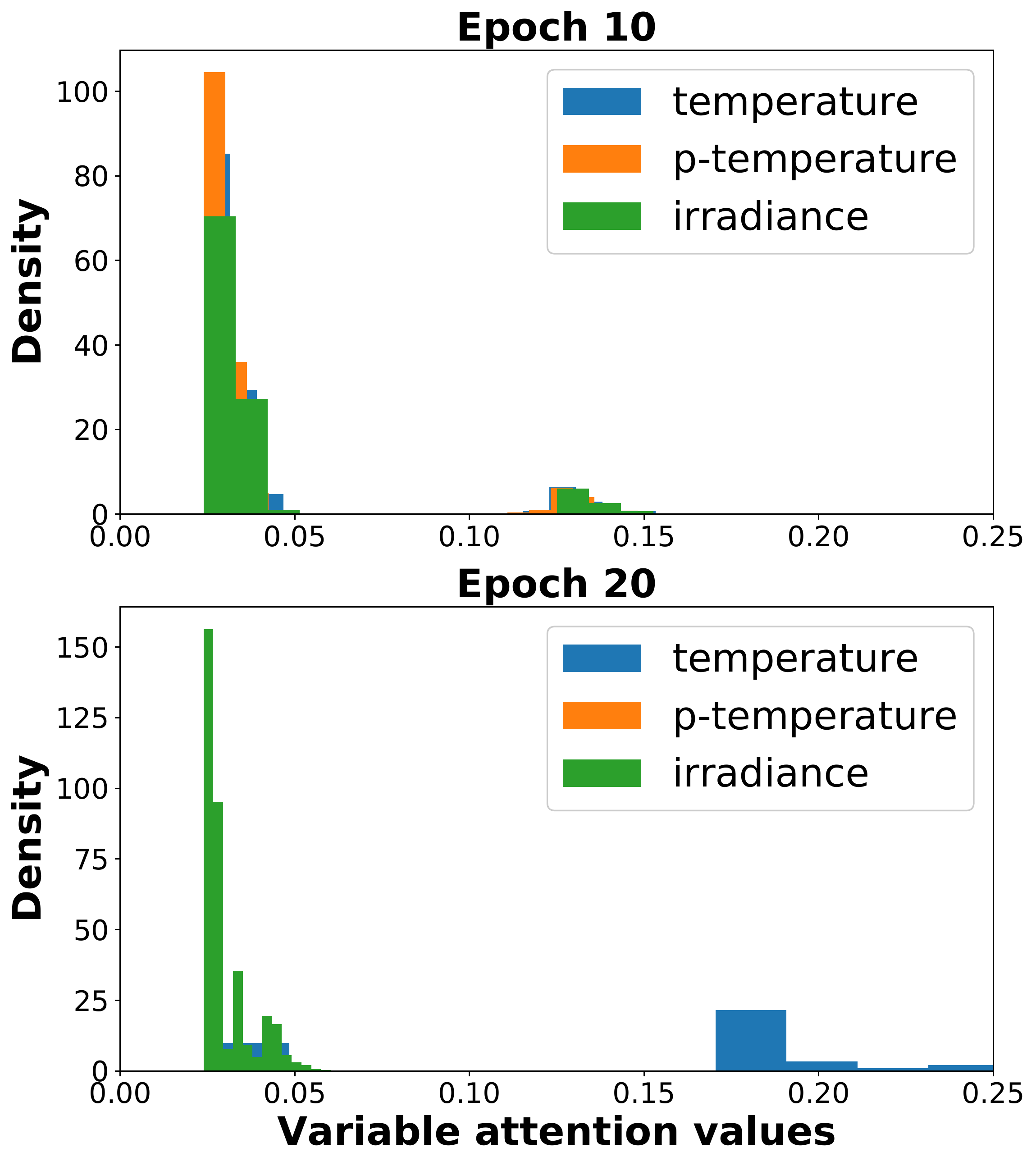}
     \end{subfigure}
     \begin{subfigure}[t]{0.24\textwidth}
         \centering
         \includegraphics[width=0.95\textwidth]{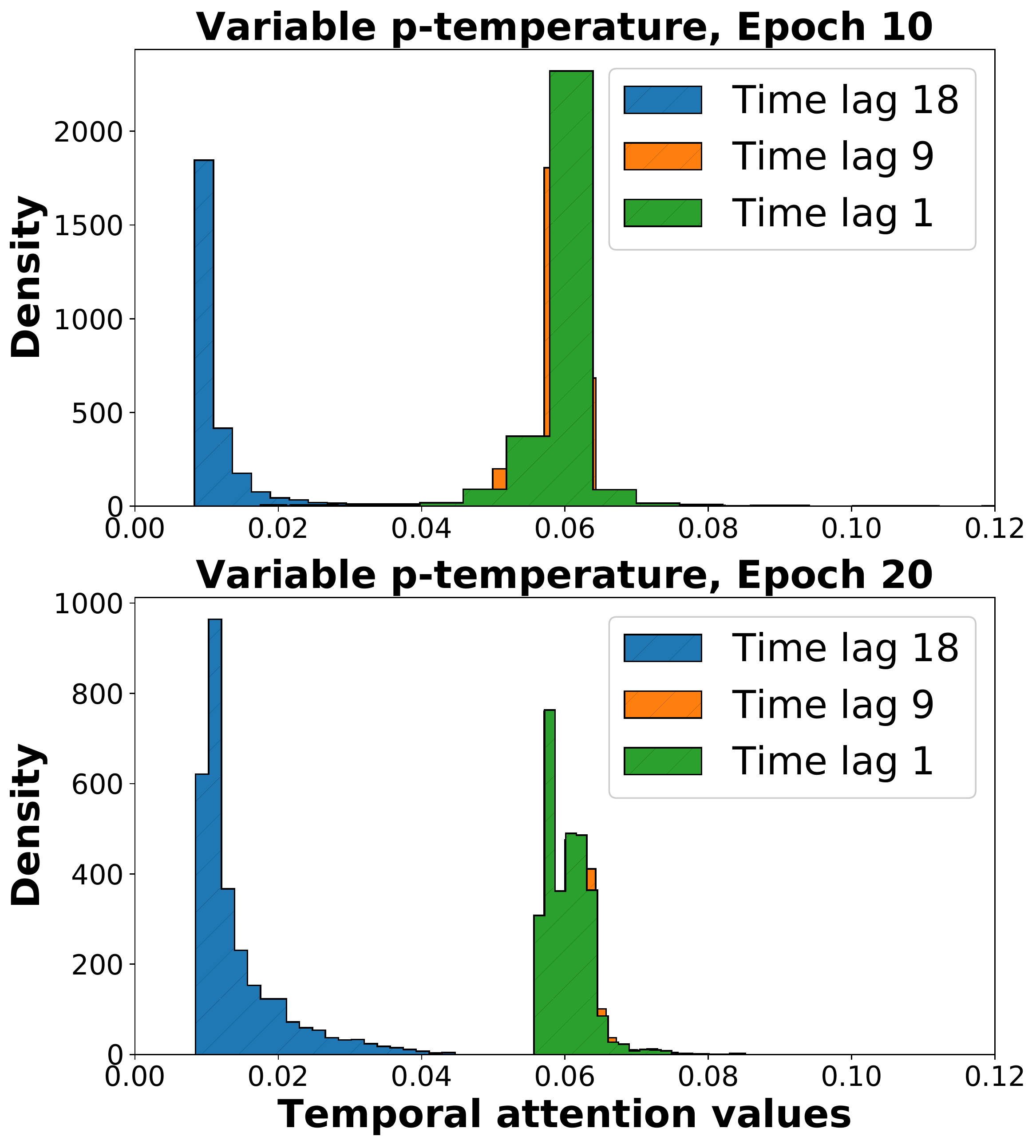}
     \end{subfigure}
     \caption{\small{Left: Histograms of variable attention at different epochs. Right: Histograms of temporal attention of variable ``P-temperature''. It is nontrivial to interpret variable and temporal importance from attention values on training instances.}
     }
     \vspace{-5pt}
     \label{fig:att_interpret}
 \end{figure}
Therefore, minimizing Eq.~\eqref{eq:loss} enables to simultaneously learn the network parameters and importance vectors without the need of post processing on trained networks.


In particular, in Eq.~\eqref{eq:loss} the first two terms
are derived from the standard EM procedure.
For the last term $\mathbb{E}_{q_m^n}[ \, \log \Pr(z_{T+1, m} = n \, | \,  \mathbf{I} )$, intuitively it serves as a regularization on the posterior of latent variable $z_{T+1, m}$ and encourages individual instances to follow the global pattern $\mathbf{I}$, which parameterizes a discrete distribution on $z_{T+1, m}$. 
And $q^n_m$ represents the posterior of $z_{T+1, m}$ by Eq.~\eqref{eq:poster}:
\begin{equation}\label{eq:poster}
\textstyle
\begin{split}
\textstyle
& q^n_m  \coloneqq \Pr (z_{T+1, m}  = n | \, \mathbf{X}_{T, m}, y_{T+1, m} \, ; \Theta) \\
& \propto p(y_{T+1, m} | z_{T+1, m} = n, \mathbf{X}_{T, m} ) \cdot \Pr(z_{T+1, m} = n | \mathbf{X}_{T, m}) \\
& \approx p( y_{T+1, m} \, | \, z_{T+1, m} = n,  \mathbf{h}_T^n \oplus \mathbf{g}^n  ) \\
&\quad \cdot \Pr(z_{T+1, m} = n \, | \, \mathbf{h}_T^1 \oplus \mathbf{g}^1, \cdots, \mathbf{h}_T^N \oplus \mathbf{g}^N )
\end{split}
\end{equation}
During the training phase, network parameters $\Theta$ and importance vectors are alternatively learned. 
In a certain round of the loss function minimization, we first fix the current value of $\Theta$ and evaluate $q_m^n$ for the batch of data. 
Then, since the first two terms in the loss functions solely depend on network parameters $\Theta$, they are minimized via gradient descent to update $\Theta$. 
For the last term, fortunately we can derive a simple closed-form solution of $\mathbf{I}$ as:
\begin{equation}
\textstyle
\mathbf{I} = \frac{1}{M} \sum_m \mathbf{q}_m,  \text{    } \mathbf{q}_m = [q_m^1, \cdots, q_m^n]^{\top}
\end{equation}
, which takes into account both variable attention and predictive likelihood in the importance vector.

As for temporal importance, it can also be derived through EM, but it requires a hierarchical mixture. 
For the sake of computing efficiency, the variable-wise temporal importance vector is derived from attention values as follows:
\begin{equation}
\textstyle
\mathbf{T}^n =  \frac{1}{M} \sum_m \boldsymbol{\alpha}_m^n,  \text{    } \boldsymbol{\alpha}_m = [\alpha_{1, m}^n, \cdots, \alpha_{T, m}^n]^{\top}
\end{equation}
This process iterates until convergence.
After the training, we obtain the neural networks ready for predicting as well as the variable and temporal importance vectors.
Then, in the predicting phase, the prediction of $y_{T+1}$ is obtained by the weighted sum of means as:
\begin{small}
\begin{equation}
\textstyle
\hat{y}_{T+1} = \sum_n  \mu_n \cdot \Pr(z_{T+1} = n \, | \, \mathbf{h}_T^1 \oplus \mathbf{g}^1, \cdots, \mathbf{h}_T^N \oplus \mathbf{g}^N ) \,.
\end{equation}
\end{small}
\vspace{-0.8cm}
\section{Experiments}
\begin{table*}[htbp]
  \centering
  \caption{\small{RMSE and MAE with std. errors}}
  \small
 \resizebox{1.0\textwidth}{!}{%
  \begin{tabular}{|C{1.55cm}|C{4.0cm}|C{4.4cm}|c|}
    \hline
     {Dataset} &  PM2.5 & PLANT & SML  \\
    \hline
    STRX &     $52.51 \pm 0.82$, $47.35 \pm 0.92$  &  $231.43 \pm 0.19$, $193.23 \pm 0.43$ & $0.039 \pm 0.001$, $0.033 \pm 0.001$ \\
    ARIMAX &   $42.51 \pm 1.13$, $40.23 \pm 0.83$  &  $225.54 \pm 0.23$, $193.42 \pm 0.41$ & $0.060 \pm 0.002$, $0.053 \pm 0.002$ \\
    \hline
    RF &       $38.84 \pm 1.12$, $22.27 \pm 0.63$  &  $164.23 \pm 0.65$, $130.90 \pm 0.15$ & $0.045 \pm 0.001$, $0.032 \pm 0.001$ \\
    XGT &      $25.28 \pm 1.01$, $15.93 \pm 0.72$  &  $164.10 \pm 0.54$, $131.47 \pm 0.21$ & $0.017 \pm 0.001$, $0.013 \pm 0.001$ \\
    ENET &     $26.31 \pm 1.33$, $15.91 \pm 0.51$  &  $168.22 \pm 0.49$, $137.04 \pm 0.38$ & $0.018 \pm 0.001$, $0.015 \pm 0.001$ \\
    \hline
    DUAL &     $25.31 \pm 0.91$, $16.21 \pm 0.42$  & $163.29 \pm 0.54$, $130.87 \pm 0.12$ & $0.019 \pm 0.001$, $0.015 \pm 0.001$ \\
    RETAIN &   $31.12 \pm 0.97$, $20.11 \pm 0.76$   & $250.69 \pm 0.36$, $190.11 \pm 0.15$ & $0.048 \pm 0.001$, $0.037 \pm 0.001$ \\
    \hline
    IMV-Full & $24.47 \pm 0.34$, $15.23 \pm 0.61$  &  $157.32 \pm 0.21$, $128.42 \pm 0.15$  &  $0.015 \pm 0.002$, $0.012 \pm 0.001$ \\
    IMV-Tensor &  $\mathbf{24.29 \pm 0.45}$, $\mathbf{14.87 \pm 0.44}$  &  $\mathbf{156.32 \pm 0.31}$, $\mathbf{127.42 \pm 0.21}$  & $\mathbf{0.009 \pm 0.0009}$, $\mathbf{0.006 \pm 0.0005}$ \\
    \hline
\end{tabular}%
}
\vspace{-10pt}
\label{tab:full}
\end{table*}

\subsection{Datasets}
\textbf{PM2.5:}
It contains hourly PM2.5 data and the associated meteorological data in Beijing of China. PM2.5 measurement is the target series. The exogenous time series include dew point, temperature, pressure, combined wind direction, accumulated wind speed, hours of snow, and hours of rain. Totally we have $41,700$ multi-variable sequences.


\textbf{PLANT:}
This records the time series of energy production of a photo-voltaic power plant in Italy \citep{ceci2017predictive}. Exogenous data consists of 9 weather conditions variables (such as temperature, cloud coverage, etc.). The power production is the target. 
It provides 20842 sequences split into training ($70\%$), validation ($10\%$) and testing sets ($20\%$).

\textbf{SML} is a public dataset used for indoor temperature forecasting. 
Same as \citep{qin2017dual}, the room temperature is taken as the target series and another 16 time series are exogenous series.
The data were sampled every minute.
The first 3200, the following 400 and the last 537 data points are respectively used for training, validation, and test.

Due to the page limitation, experimental results on additional datasets are in the supplementary material.
\begin{figure*}[htbp]
    \vskip 0.2in
    \centering
    \begin{subfigure}[t]{0.32
    \textwidth}
        \centering
        \includegraphics[width=0.99\textwidth]{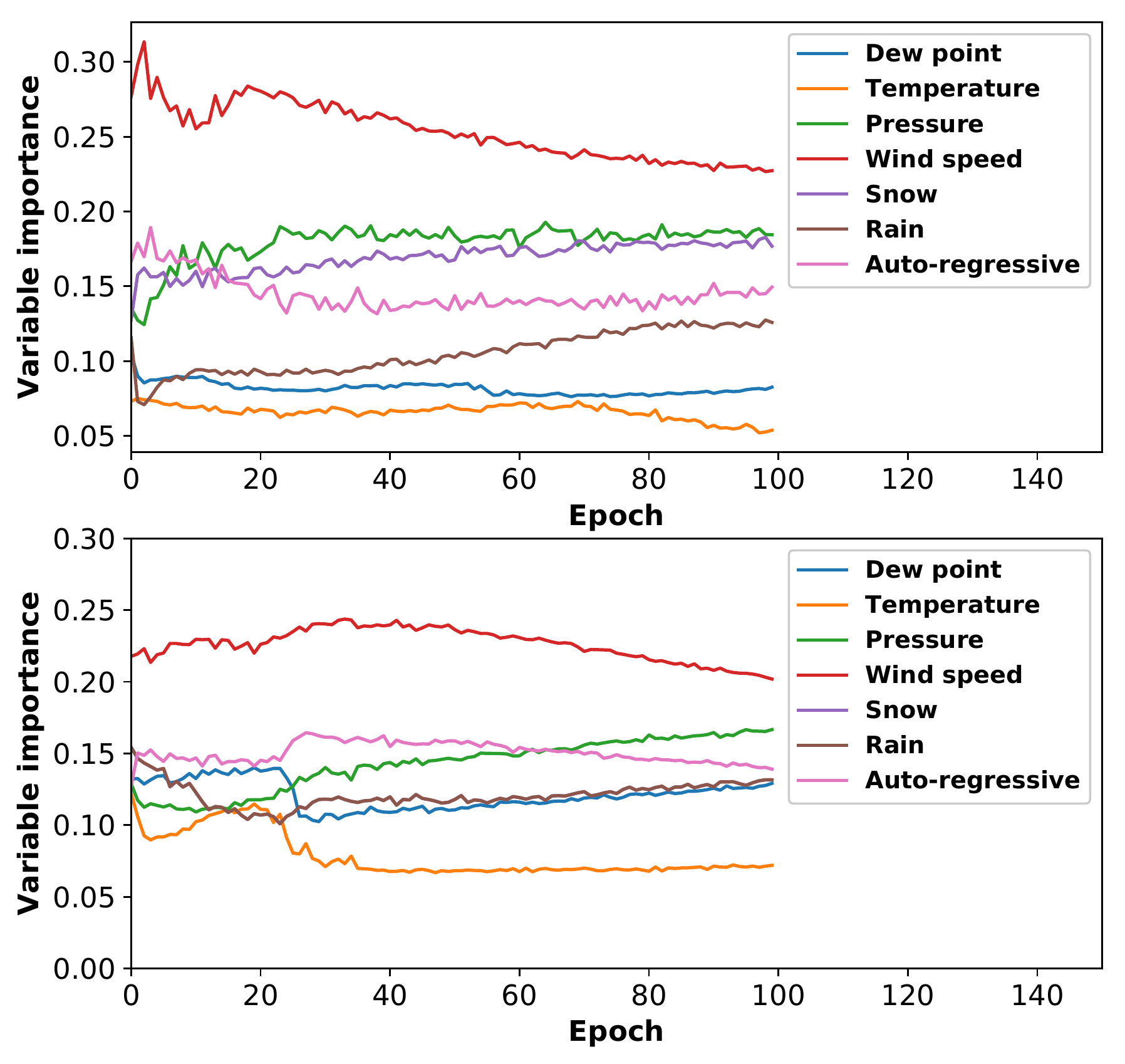}
        \caption{\small{PM2.5}}
    \end{subfigure}%
    ~
    \begin{subfigure}[t]{0.33\textwidth}
        \centering
        \includegraphics[width=0.99\textwidth]{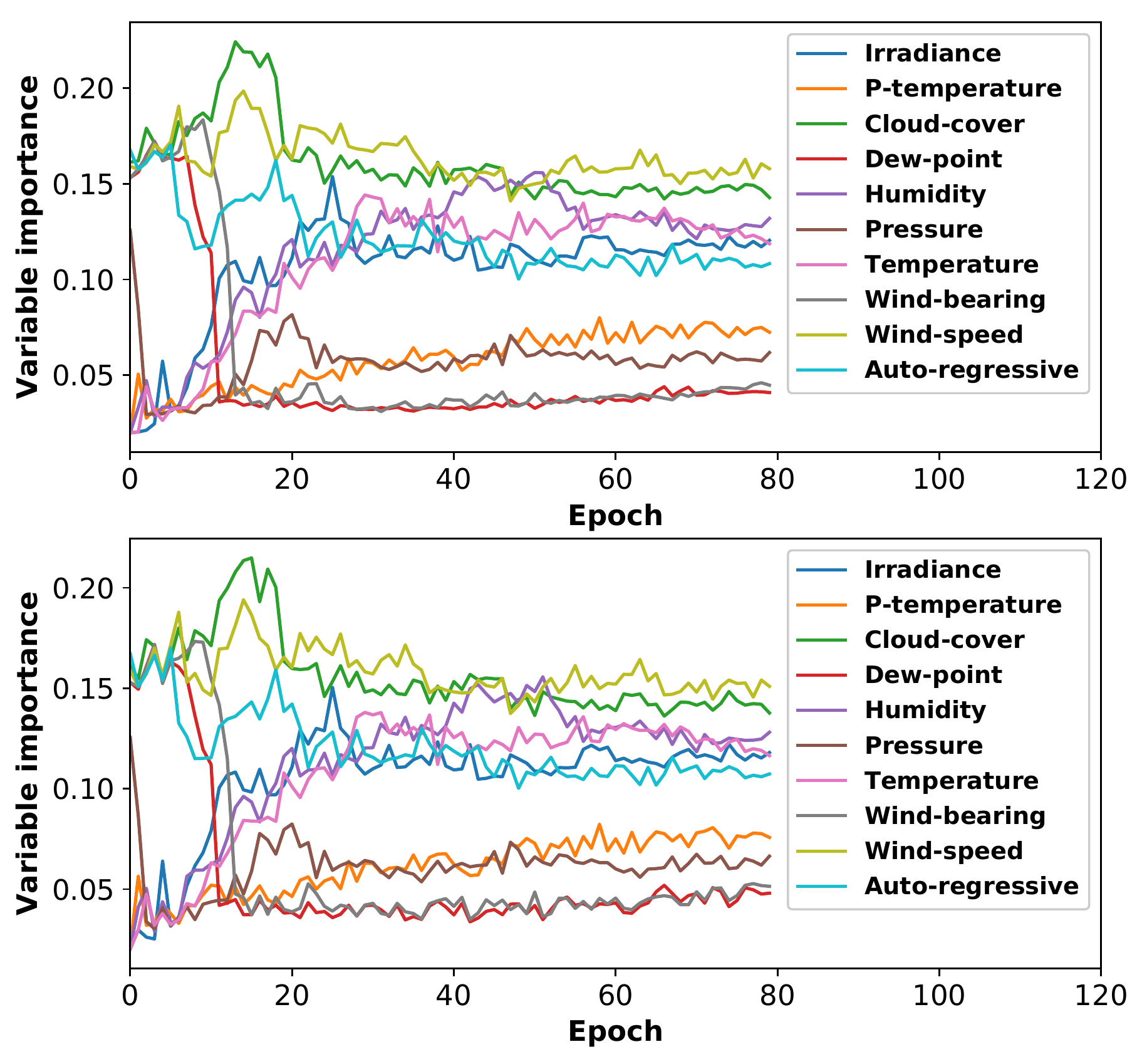}
        \caption{\small{PLANT}}
    \end{subfigure}%
    ~
    \begin{subfigure}[t]{0.32\textwidth}
        \centering
        \includegraphics[width=0.99\textwidth]{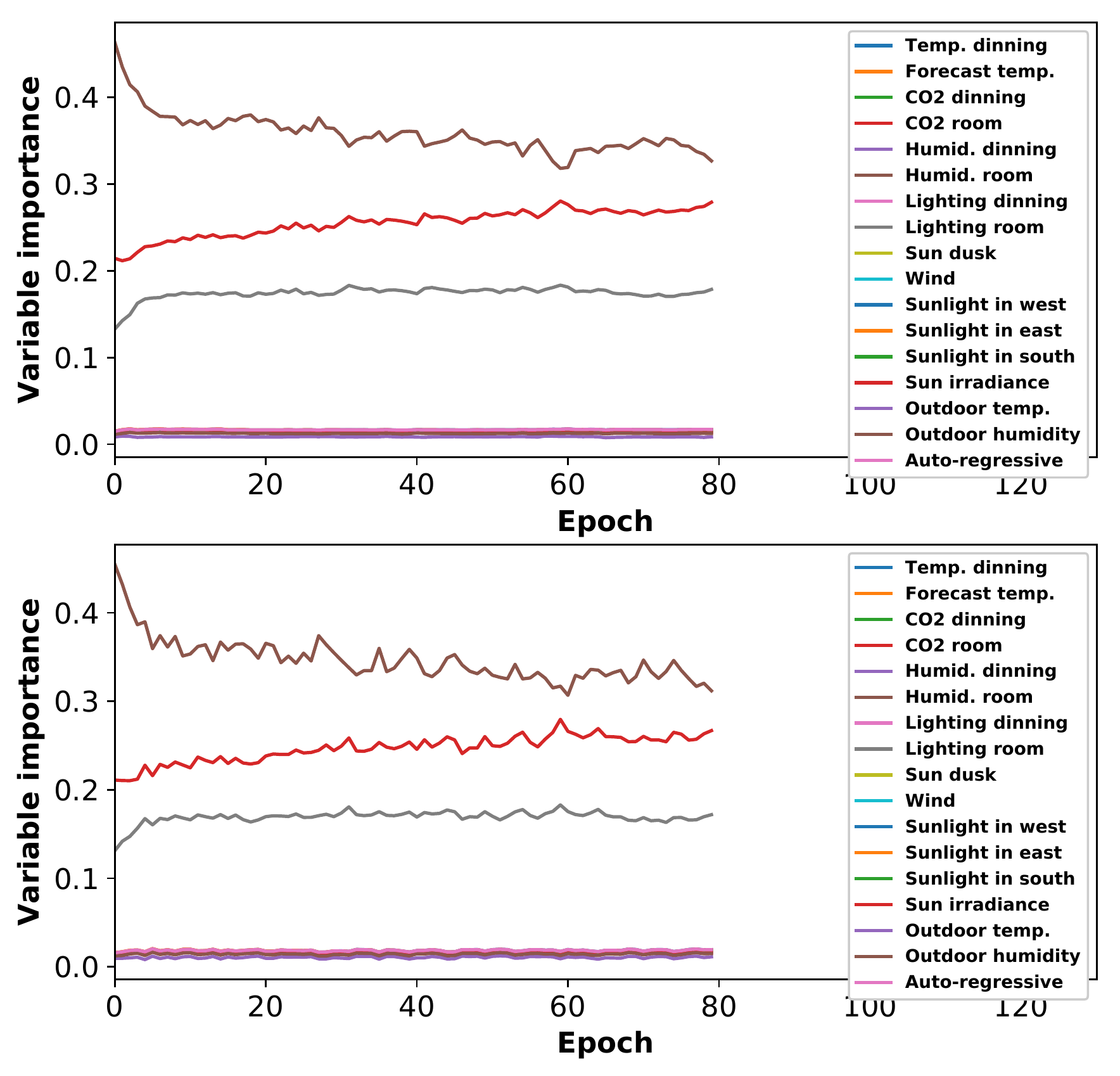}
        \caption{\small{SML}}
    \end{subfigure}%
    ~
    \vspace{-10pt}
    \caption{\small Variable importance over epochs during the training. Top and bottom panels in each sub-figure respectively correspond to IMV-Full and IMV-Tensor. 
    In Sec. 4.4., we provide evidence from domain knowledge and show the agreement with the discoveries by IMV-Full and IMV-Tensor.
}
    \label{fig:impt_var}
    \vskip -0.2in
\end{figure*}

\subsection{Baselines and Evaluation Setup}
The first category of statistics baselines includes:

\textbf{STRX} is the structural time series model with exogenous variables 
~\citep{scott2014predicting, radinsky2012modeling}. 
It is consisted of unobserved components via state space models.

\textbf{ARIMAX} is the auto-regressive integrated moving average with regression terms on exogenous variables~\citep{hyndman2014forecasting}. 
It is a special case of vector auto-regression in this scenario.

The second category of  machine learning baselines includes: 

\textbf{RF} refers to random forests, an ensemble learning method consisting of several decision trees~\citep{liaw2002classification} and was used in time series prediction~\citep{patel2015predicting}.

\textbf{XGT} refers to the extreme gradient boosting~\citep{chen2016xgboost}. 
It is the application of boosting methods to regression trees~\citep{friedman2001greedy}. 

\textbf{ENET} represents Elastic-Net, which is a regularized regression method combining both L1 and L2 penalties of the lasso and ridge methods~\citep{zou2005regularization} and used in time series analysis~\citep{liu2010learning, bai2008forecasting}.

The third category of deep learning baselines includes:

\textbf{RETAIN} uses RNNs to respectively learn weights on input data for predicting~\citep{choi2016retain}.
It defines contribution coefficients on attentions to represent feature importance. 

\textbf{DUAL} is an encoder-decoder architecture using an encoder to learn attentions and feeding pre-weighted input data into a decoder for forecasting~\citep{qin2017dual}.
It uses temporal-wise variable attentions to reflect variable importance.


In ARIMAX, the orders of auto-regression and moving-average terms are set as the window size of the training data.
For RF and XGT, hyper-parameter tree depth and the number of iterations are chosen from range $[3, 10]$ and $[2, 200]$ via grid search. For XGT, L2 regularization is added by searching within $\{0.0001, 0.001, 0.01, 0.1, 1, 10\}$.
As for ENET, the coefficients for L2 and L1 penalties are selected from $\{0, 0.1, 0.3, 0.5, 0.7, 0.9, 1, 2\}$. 
For machine learning baselines, multi-variable input sequences are flattened into feature vectors. 

We implemented IMV-LSTM and deep learning baselines with Tensorflow.
We used Adam with the mini-batch size $64$~\citep{kingma2014adam}.
For the size of recurrent and dense layers in the baselines, we conduct grid search over $\{16, 32, 64, 128, 256, 512\}$. 
The size of IMV-LSTM layers is set by the number of neurons per variable selected from $\{10, 15, 20, 25\}$.
Dropout is selected in $\{0, 0.2, 0.5\}$.
Learning rate is searched in $\{0.0005, 0.001, 0.005, 0.01, 0.05 \}$.
L2 regularization is added with the coefficient chosen from $\{0.0001, 0.001, 0.01, 0.1, 1.0\}$.
We train each approach $5$ times and report average performance.
The window size (i.e. $T$) for PM2.5 and SML is set to 10 according to~\citep{qin2017dual}, while for PLANT it is 20 to test long dependency.

We consider two metrics to measure the prediction performance. RMSE is defined as $ \text{RMSE} = \sqrt[]{\sum_{k} (y_k - \hat{y}_k)^2/K} $. MAE is defined as $ \text{MAE} = \sum_{k}|y_k - \hat{y}_k|/K$.

\subsection{Prediction Performance}
We report the prediction errors in Table~\ref{tab:full}, each cell of which presents the average RMSE and MAE with standard errors.
In particular, IMV-LSTM family outperforms baselines by around $80\%$ at most.
Deep learning baselines mostly outperform other baselines.
Boosting method XGT presents comparable performance with deep learning baselines in PLANT and SML datasets.

\textbf{Insights.} For multi-variable data carrying different patterns, properly modeling individual variables and their interaction is important for the prediction performance.
IMV-Full keeps the variable interaction in the gate updating. 
IMV-Tensor maintains variable-wise hidden states independently and only captures their interaction via the mixture attention.
Experimentally, IMV-Full and IMV-Tensor present comparable performance, though mixture on independent variable-wise hidden states in IMV-Tensor leads to the best performance.
Note that IMV-Full and IMV-Tensor are of single network structure.
Instead of composite network architectures in baselines, mixture of well-maintained variable-wise hidden states in IMV-LSTM also improves the prediction performance and empowers the interpretability shown below.

\subsection{Interpretation}
In this part, we qualitatively analyze the meaningfulness of variable and temporal importance.
Fig.~\ref{fig:impt_var} and Fig.~\ref{fig:impt_temp} respectively show the variable and temporal importance values during the training under the best hyper-parameters. 
The importance values learned by IMV-Full and IMV-Tensor could be slightly different, because in IMV-Tensor the gate and memory update scheme evolve independently, thereby leading to different hidden states to IMV-Full. 
IMV-LSTM is easier to understand, compared to baseline RETAIN and DUAL, since they do not show in global level importance interpretation like in Fig.~\ref{fig:impt_var} and ~\ref{fig:impt_temp}.
\begin{table*}[htbp]
  \centering
  \caption{\small{RMSE and MAE with std. errors under top $50\%$ important variables}}
  \small
\resizebox{1.0\textwidth}{!}{%
  \begin{tabular}{|C{1.85cm}|c|c|c|}
    \hline
     {Dataset} &  PM2.5 & PLANT & SML \\
    \hline
    DUAL &      $ 25.09 \pm 0.04$, $15.59 \pm 0.08$ &  $171.30 \pm 0.17$, $154.15 \pm 0.20$ & $0.026 \pm 0.002$, $0.018 \pm 0.002$ \\
    RETAIN &    $ 49.25 \pm 0.11$, $34.03 \pm 0.24$ & $226.38 \pm 0.72$, $167.90 \pm 0.81$  & $0.060 \pm 0.001$, $0.044 \pm 0.004$ \\
    \hline
    IMV-Full-P &  $25.14 \pm 0.54$, $ 16.01 \pm 0.52 $  &  $165.04 \pm 0.08$, $129.09 \pm 0.09$  & $0.016 \pm 0.001$,  $0.013 \pm 0.0009$ \\
    IMV-Tensor-P & $24.84 \pm 0.43$, $ 15.57 \pm 0.61$  &  $161.98 \pm 0.11$, $131.17 \pm 0.12$  & $0.013 \pm 0.0008$, $0.009 \pm 0.0005$ \\
    \hline
    IMV-Full &    $24.32 \pm 0.32$,  $15.47 \pm 0.02$     & $162.14 \pm 0.10$, $128.83 \pm 0.12$  & $0.015 \pm 0.001$, $0.011 \pm 0.002 $ \\
    IMV-Tensor &  $\mathbf{24.12 \pm 0.03}$, $\mathbf{15.10 \pm 0.01}$   & $\mathbf{157.64 \pm 0.14}$, $\mathbf{128.16 \pm 0.13}$ & $\mathbf{0.007 \pm 0.0005}$, $\mathbf{0.006 \pm 0.0003}$ \\
    \hline
\end{tabular}%
}
\vspace{-10pt}
\label{tab:top50}
\end{table*}

\textbf{Variable importance.}
In Fig.~\ref{fig:impt_var}, top and bottom panels in each sub-fig show the variable importance values w.r.t. training epochs from IMV-Full and IMV-Tensor. Overall, variable importance values converge during the training and the ranking of variable importance is identified at the end of the training.
Variables with high importance values contribute more to the prediction of IMV-LSTM.

In Fig.~\ref{fig:impt_var}(a), for PM2.5 dataset, variables ``Wind speed'', ``Pressure'', ``Snow'', ``Rain'' are high ranked by IMV-LSTM. 
According to a recent work studying air pollution~\citep{liang2015assessing}, ``Dew Point'' and ``Pressure'' are both related to PM2.5 and they are also inter-correlated. 
One ``Pressure'' variable is enough to learn accurate forecasting and thus has the high importance value.
Strong wind can bring dry and fresh air.
``Snow'' and ``Rain'' amount are related to the air quality as well.
Variables important for IMV-LSTM are in line with the domain knowledge in \citep{liang2015assessing}.

Fig.~\ref{fig:impt_var}(b) shows that in PLANT dataset in addition to ``Irradiance'' and  ``Cloud cover'', ``Wind speed'', ``Humidity'' as well as ``Temperature'' are also high ranked and relatively used more in IMV-LSTM to provide accurate forecasting. 
As is discussed in \citep{mekhilef2012effect, ghazi2014effect}, humidity causes dust deposition and consequentially degradation in solar cell efficiency. Increased wind can move heat from the cell surface, which leads to better efficiency.

Fig.~\ref{fig:impt_var}(c) demonstrates that variables ``Humid. room'', ``CO2 room'', and ``Lighting room''
are relatively more important for IMV-LSTM (``Humid.'' represents humidity). 
As is suggested in~\citep{nguyen2014relationship, hoppe1993indoor}, humidity is correlated to the indoor temperature.

\textbf{Temporal importance.}
Fig.~\ref{fig:impt_temp} demonstrate the temporal importance values of each variable at the ending of the training.
The lighter the color, the more the corresponding data contributes to the prediction.

Specifically, in Fig.~\ref{fig:impt_temp}(a), short history of variables ``Snow'' and ``Wind speed'' contributes more to the prediction. PM2.5 itself has relatively long-term auto-correlation, i.e. around $5$ hours.   
Fig.~\ref{fig:impt_temp}(b) shows that recent data of aforementioned important variables ``Wind'' and ``Temperature'' are highly used for prediction, while ``Cloud-cover'' is long-term correlated to the target, i.e. around $13$ hours.
In Fig.~\ref{fig:impt_temp}(c) temporal importance values are mostly uniform, though ``Humid. dinning'' has short correlation, ``Outdoor temp.'' and ``Lighting dinning'' are relatively long-term correlated to the target.
\vspace{-0.17cm}
\begin{figure}[!ht]
    \centering
    \begin{subfigure}[]{0.45\textwidth}
        \centering
        \includegraphics[width=0.99\textwidth]{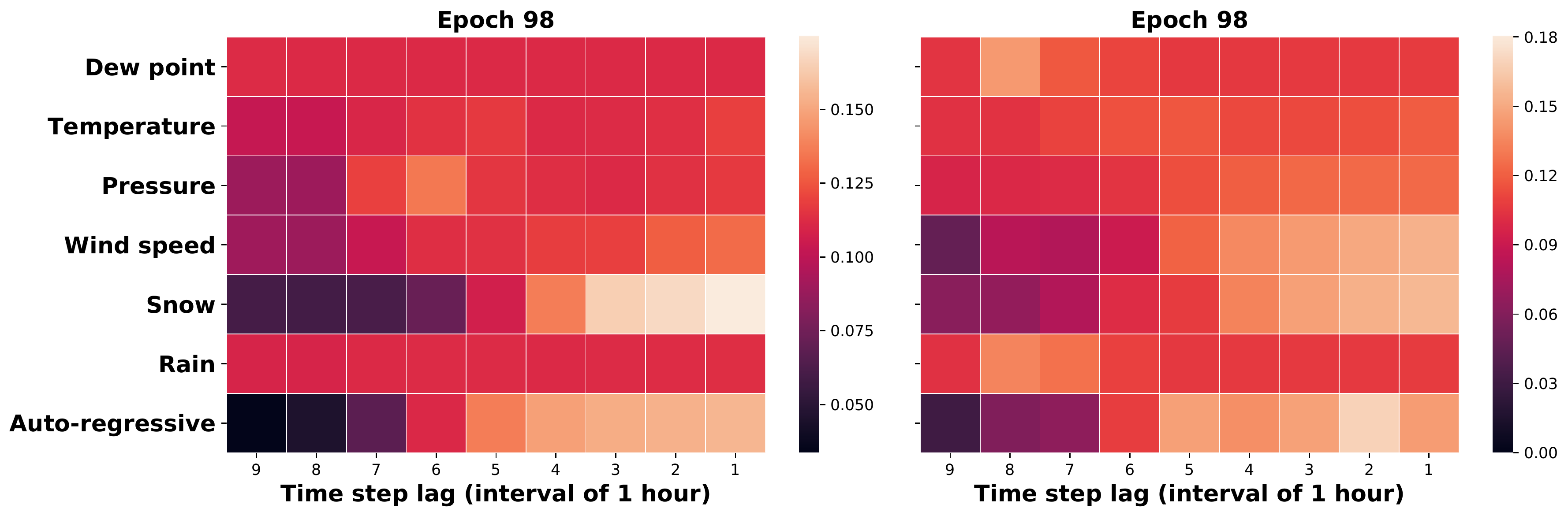}
        \caption{\small{PM2.5}}
    \end{subfigure}
    ~
    \begin{subfigure}[]{0.45\textwidth}
        \centering
        \includegraphics[width=0.99\textwidth]{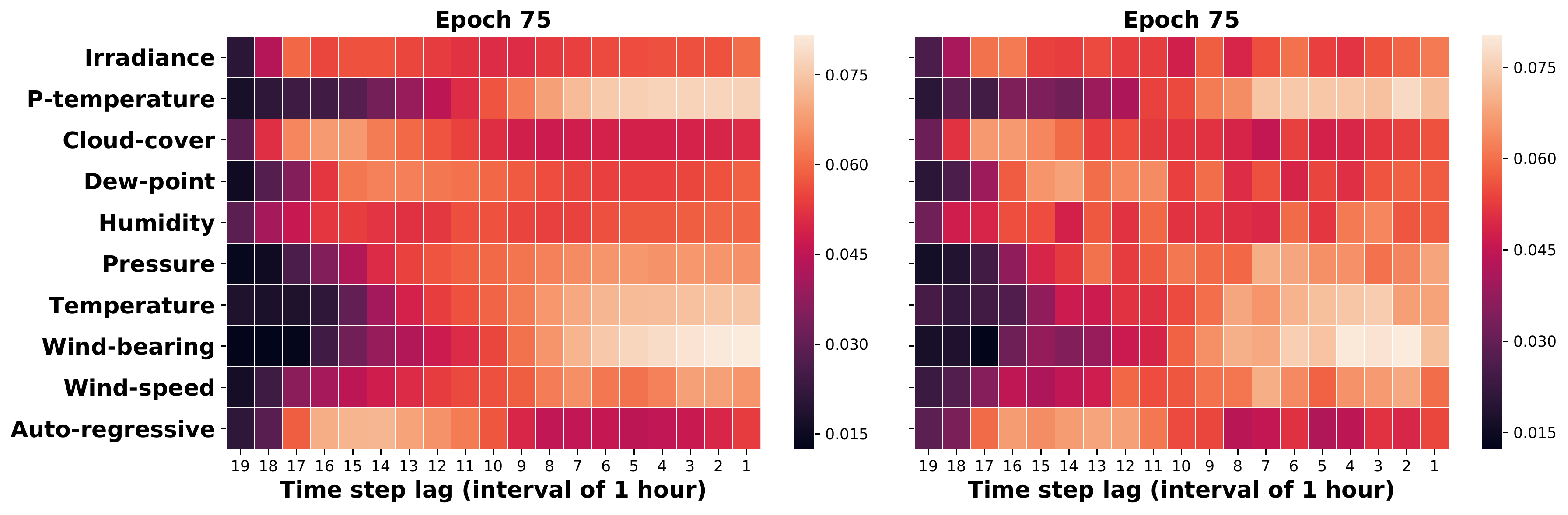}
        \caption{\small{PLANT}}
    \end{subfigure}
    ~
    \begin{subfigure}[]{0.45\textwidth}
        \centering
        \includegraphics[width=0.99\textwidth]{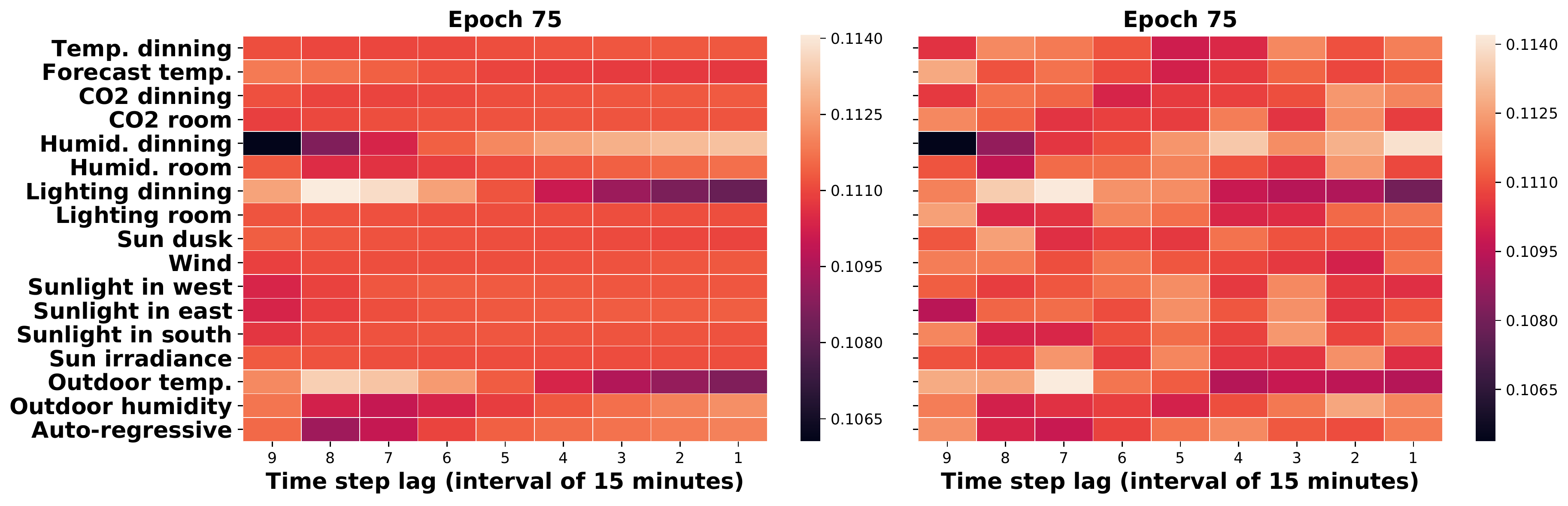}
        \caption{\small{SML}}
    \end{subfigure}
    ~
    \vspace{-10pt}
    \caption{\small Variable-wise temporal importance interpretation. Left and right panels respectively correspond to IMV-Full and IMV-Tensor. Different patterns of decays in the temporal importance of variables are observed, which brings the additional interpretation of the dynamics of each variable. (Best viewed in color)}
    \vspace{-15pt}
    \label{fig:impt_temp}
\end{figure}

\subsection{Variable Selection}
In this group of experiments, we quantitatively evaluate the efficacy of variable importance through the lens of prediction tasks.
We focus on IMV-LSTM family and RNN baselines, i.e. DUAL and RETAIN.

Specifically, for each approach, we first rank variables respectively according to the variable importance in IMV-LSTM, variable attention in DUAL and contribution coefficients in RETAIN.
Meanwhile, we add one more group of baselines denoted by IMV-Full-P and IMV-Tensor-P. 
The label ``-P'' represents that the Pearson correlation is used to rank the variables with the highest (absolute) correlation values to the target and the selected data is fed to IMV-LSTM.

Then we rebuild datasets only consisting of top $50\%$ ranked variables by respective methods, retrain each model with these new datasets and obtain the errors in Table~\ref{tab:top50}.

\textbf{Insights.}
Ideally, effective variable selection enables the corresponding retrained models to have comparable errors in comparison to their counterparts trained on full data in Table~\ref{tab:full}.
IMV-Full and IMV-Tensor present comparable and even lower errors in Table~\ref{tab:top50}, while DUAL and RETAIN have higher errors mostly. 
Pearson correlation measures linear relation. Selecting variables based on it neglects non-linear correlation and is not suitable for LSTM to attain the best performance.
An additional advantage of variable selection is the training efficiency, e.g. training time of each epoch in IMV-Tensor is reduced from $\sim$16 to $\sim$11 sec.
\section{Conclusion and Discussion}
In this paper, we explore the internal structures of LSTMs for interpretable prediction on multi-variable time series. 
Based on the hidden state matrix, we present two realizations i.e. IMV-Full and IMV-Tensor, which enable to infer and quantify variable importance and variable-wise temporal importance w.r.t. the target.
Extensive experiments provide insights into achieving superior prediction performance and importance interpretation for LSTM. 

Regarding high order effect, e.g. variable interaction in data, it can be captured by 
adding additional rows into hidden state matrices and additional elements into importance vectors accordingly. 
This will be the future work.

\section*{Acknowledgements}
The work has been funded by the EU Horizon 2020 SoBigData project under grant agreement No. 654024.

\bibliography{reference}
\bibliographystyle{icml2019}

\onecolumn
\section{Appendix}

\subsection{Interpretable Multi-Variable LSTM}

\textbf{Proof of Lemma 3.3}
\begin{proof}
For simplicity, we ignore the data instance index $m$ in the following proof.

The log-likelihood of the target conditional on input variables is defined as:
\begin{equation}\label{eq:em}
\begin{split}
& \log p(y_{T+1} \, | \mathbf{X}_{T}) = \log \sum_{n=1}^{N} p(y_{T+1}, z_{T+1} = n | \mathbf{X}_{T} ) \\
& \geq \sum_{n=1}^{N} q^n \log p(y_{T+1} | z_{T+1} = n, \mathbf{X}_{T}) \Pr(z_{T+1} = n, \mathbf{X}_{T}) - q^n \log q^n \\
& = \sum_{n=1}^{N} q^n [\log p(y_{T+1} | z_{T+1} = n, \mathbf{X}_{T}) + \log \Pr(z_{T+1} = n, \mathbf{X}_{T})] + q^n \log q^n - 2 q^n \log q^n
\end{split}
\end{equation}

Based on Gibbs inequality, we can have 
\begin{equation}\label{eq:gibbs}
\sum_{n=1}^{N} q^n \log q^n \geq \sum_{n=1}^{N} q^n \log \Pr(z_{T+1} = n | \mathbf{I})
\end{equation}

Since $\mathbf{I} \in R^{N}_{\geq 0}$, $\sum_{n=1}^N \text{I}_{n} = 1$, it can parameterize a categorical distribution on $z_{T+1}$. 

Then introducing Eq.~\ref{eq:gibbs} into Eq.~\ref{eq:em}, we can obtain
\begin{equation}\label{eq:bound}
\begin{split}
\log p(y_{T+1} \, | \mathbf{X}_{T}) & \geq \sum_{n=1}^{N} q^n [\log p(y_{T+1} | z_{T+1} = n, \mathbf{X}_{T}) + \log \Pr(z_{T+1} = n, \mathbf{X}_{T})] + q^n \log \Pr(z_{T+1} = n | \mathbf{I}) - 2 q^n \log q^n \\
& \approx \mathbb{E}_{q^n}[ \, \log p( y_{T+1, } \, | \, z_{T+1, } = n, \mathbf{h}_T^n \oplus \mathbf{g}^n ) ] + \mathbb{E}_{q^n}[ \, \log \Pr(z_{T+1, } = n \, | \, \mathbf{h}_T^1 \oplus \mathbf{g}^1, \cdots, \mathbf{h}_T^N \oplus \mathbf{g}^N )] \\
&\quad + \mathbb{E}_{q^n}[ \, \log \Pr(z_{T+1, } = n \, | \,  \mathbf{I} )] - 2 q^n \log q^n
\end{split}
\end{equation}
During the EM process, after the E-step, $2 q^n \log q^n$ will be a constant and is not involved in the optimization process. In the M-step, minimizing the negative log-likelihood amounts to minimize the loss function as follows:
\begin{equation*}
\textstyle
\begin{split}
\mathcal{L}(\Theta, \mathbf{I} ) = & - \sum_{m=1}^{M}  \mathbb{E}_{q_m^n}[ \, \log p( y_{T+1, m} \, | \, z_{T+1, m} = n, \mathbf{h}_T^n \oplus \mathbf{g}^n ) ] \\
&\quad - \mathbb{E}_{q_m^n}[ \,\log \Pr(z_{T+1, m} = n \, | \, \mathbf{h}_T^1 \oplus \mathbf{g}^1, \cdots, \mathbf{h}_T^N \oplus \mathbf{g}^N )] \\
&\quad - \mathbb{E}_{q_m^n}[ \, \log \Pr(z_{T+1, m} = n \, | \,  \mathbf{I} )
]
\end{split}
\end{equation*}

\end{proof}

\subsection{Experiments}

In this part, we provide complementary experiment results as well as the insights from the results. 

\textbf{NASDAQ} is the dataset from \citep{qin2017dual}. It contains 81 major corporations under NASDAQ 100, as exogenous time series. 
The index value of the NASDAQ 100 is the target series. 
The frequency of the data collection is minute-by-minute. 
The first 35,100, the following 2,730 and the last 2,730 data points are respectively used as the training, validation and test sets.

\subsubsection{Prediction performance analysis} \label{subsubsec:performance_analysis}

\begin{table}[htbp]
  \centering
  \caption{\small{RMSE and MAE with std. errors}}
  \small
 \resizebox{0.5\textwidth}{!}{%
   \begin{tabular}{|c|c|c|c|}
    \hline
     {Dataset} &  NASDAQ   \\
    \hline
    STRX &     $0.41 \pm 0.01$, $0.35 \pm 0.02$   \\
    ARIMAX &   $0.34 \pm 0.02$, $0.23 \pm 0.03$   \\
    \hline
    RF &       $0.31 \pm 0.02$, $0.27 \pm 0.03$    \\
    XGT &      $0.28 \pm 0.01$, $0.23 \pm 0.02$    \\
    ENET &     $0.31 \pm 0.03$, $0.21 \pm 0.01$   \\
    \hline
    DUAL &     $0.31 \pm 0.003$, $0.21 \pm 0.002$  \\
    RETAIN &   $0.12 \pm 0.07$ , $0.11 \pm 0.06$  \\
    \hline
    IMV-Full & $0.27 \pm 0.01$, $0.23 \pm 0.01$  \\
    IMV-Tensor &  $\mathbf{0.09 \pm 0.005}$, $\mathbf{0.07 \pm 0.004}$  \\
    \hline
\end{tabular}%
}
\label{tab:appendix_full}
\end{table}

\subsubsection{Model Interpretation}

\begin{figure*}[htbp!]
    \centering
    \begin{subfigure}[t]{0.9\textwidth}
        \centering
        \includegraphics[width=0.99\textwidth]{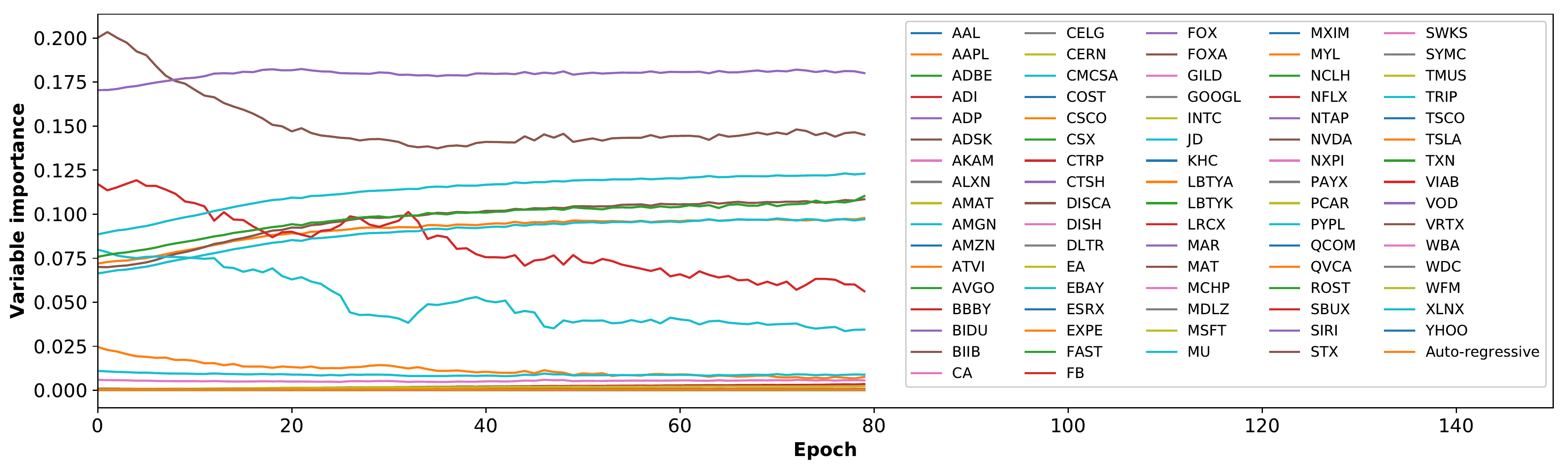}
        \caption{Variable importance w.r.t. epochs.}
    \end{subfigure}
    \begin{subfigure}[t]{0.9\textwidth}
        \centering
        \includegraphics[width=0.99\textwidth]{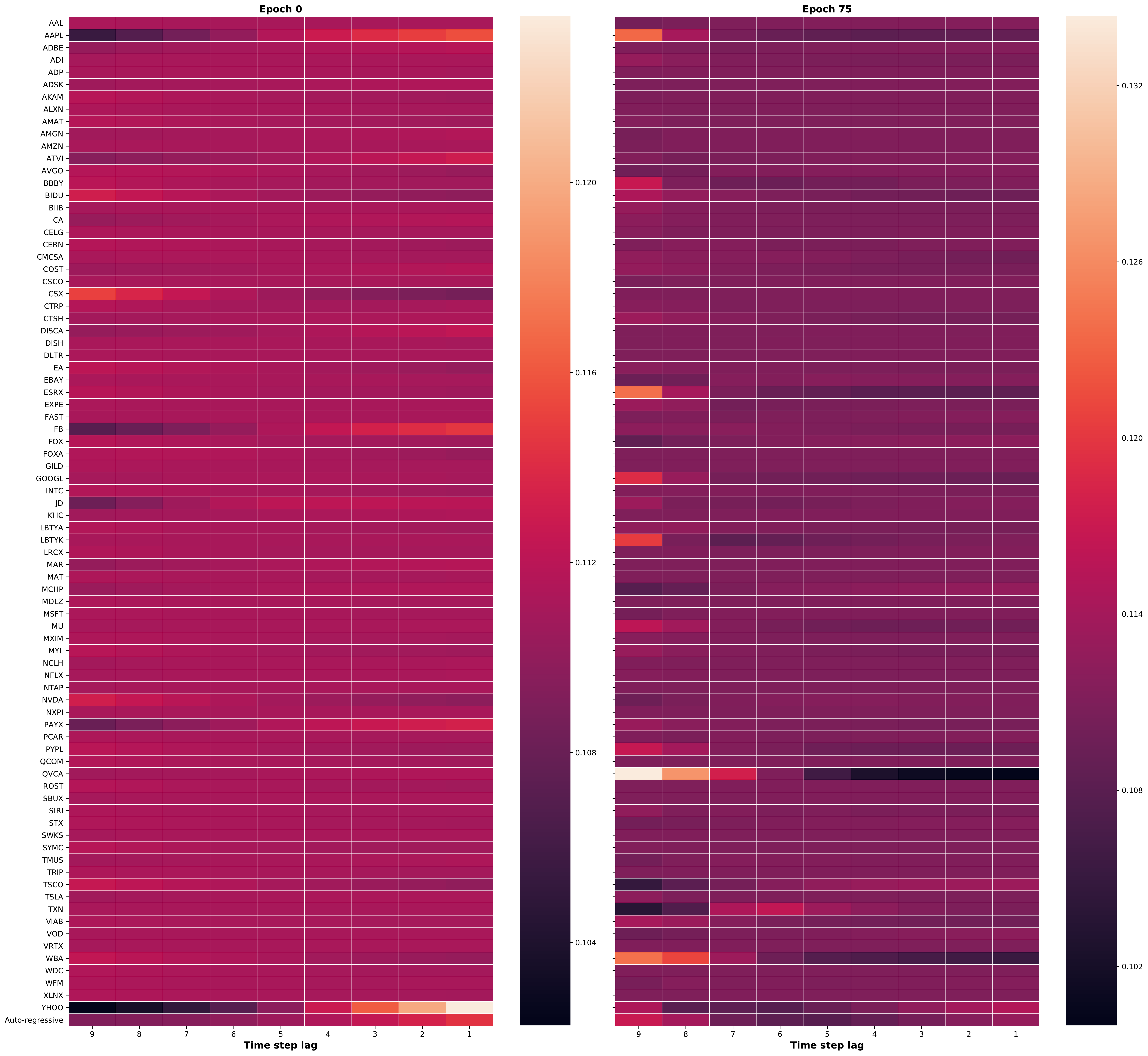}
        \caption{Variable-wise temporal importance at different epochs.}
    \end{subfigure}
    \caption{IMV-LSTM on NASDAQ dataset. (Best viewed in color)}
    \label{fig:imv_full_nasdaq}
\end{figure*}

In the following Table~\ref{tab:fullrank_nasdaq_imv}, \ref{tab:fullrank_nasdaq_dual_retain}, and \ref{tab:fullrank_sml_plant}, we list the full ranking of variables of the datasets by each approach. Variables associated with the importance or attention values are ranked in decreasing order. 

\begin{table*}[tbhp!]
  \centering
  \caption{Variable importance ranking by IMV-LSTM on NASDAQ dataset.}
  \small
\resizebox{1.0\textwidth}{!}{
  \begin{tabular}{|m{1.2cm}|l|p{10cm}|}
    \hline
    Dataset & Method & Rank of variables according to importance  \\
    \hline
    \\[-1em]
    \multirow{3}{*}{NASDAQ} & IMV-LSTM &  'ADSK', 0.00023858716, 'PAYX', 0.00023869322, 'AAL', 0.00023993119, 'MYL', 0.00024015515, 'CA', 0.00024144033], 'FOX', 0.00024341498, 'EA', 0.00024963205], 'BIDU', 0.00025009923, 'MCHP', 0.00025015706, 'QVCA', 0.00025018162, 'NVDA', 0.00025088928, 'WBA', 0.00025147066, 'LRCX', 0.00025165512, 'TSCO', 0.00025247637, 'CTSH', 0.00025284023, 'CSX', 0.00025417344, 'COST', 0.00025498777, 'BIIB', 0.00025547648, 'LBTYA', 0.00025680827, 'SIRI', 0.00025686354, 'ADBE', 0.00025687047, 'MDLZ', 0.00025788756, 'LBTYK', 0.00025885308, 'INTC', 0.00025894548, 'TSLA', 0.0002592771, 'WFM', 0.00025941888, 'SBUX', 0.00025953245, 'AVGO', 0.00026012328], 'CTRP', 0.00026024296, 'AMZN', 0.00026168497, 'ALXN', 0.00026173133, 'AMGN', 0.0002617908, 'GILD', 0.0002619058, 'VOD', 0.00026195042, 'ROST', 0.00026237246, 'NXPI', 0.0002624988, 'KHC', 0.0002625609, 'ADP', 0.0002626155, 'WDC', 0.00026269013, 'QCOM', 0.00026288, 'TMUS', 0.00026333777, 'AMAT', 0.00026334616, 'AKAM', 0.00026453246, 'PCAR', 0.00026510606, 'CERN', 0.00026535543, 'VRTX', 0.00026579297, 'MU', 0.00026719182, 'MAR', 0.00026789604, 'TXN', 0.00026821258, 'GOOGL', 0.0002684545, 'ESRX', 0.00026995668, 'ATVI', 0.0002703378, 'STX', 0.0002708045, 'FAST', 0.00027182887, 'EXPE', 0.0002747627, 'CELG', 0.00027897576, 'PYPL', 0.00027971127, 'MXIM', 0.0002802631, 'NFLX', 0.00028330996, 'BBBY', 0.00028975168, 'SYMC', 0.0002932911, 'CMCSA', 0.00031882498, 'SWKS', 0.00034903747, 'DLTR', 0.0004099159, 'YHOO', 0.0004359138, 'VIAB', 0.00046212596, 'Auto-regressive', 0.0004718905, 'MAT', 0.0008193875, 'MSFT', 0.002350653, 'ADI', 0.0035426863, 'DISH', 0.0056709386, 'AAPL', 0.007597621, 'EBAY', 0.008922806, 'JD', 0.03449823, 'FB', 0.056254942, 'XLNX', 0.09711476, 'CSCO', 0.09782402, 'DISCA', 0.108503476, 'NCLH', 0.11029968, u'TRIP', 0.12302372, 'FOXA', 0.14510903, 'NTAP', 0.18010232 \\
    \hline
\end{tabular}
}
\label{tab:fullrank_nasdaq_imv}
\end{table*}

\begin{table*}[tbhp!]
  \centering
  \caption{Variable importance ranking by DUAL and RETAIN methods on NASDAQ dataset.}
  \small
\resizebox{1.0\textwidth}{!}{
  \begin{tabular}{|m{1.2cm}|l|p{10cm}|}
    \hline
    Dataset & Method & Rank of variables according to importance  \\
    \hline
    \\[-1em]
    \multirow{3}{*}{NASDAQ} 
    & DUAL &   'NXPI', 0.003557, 'QCOM', 0.003564, 'FOX', 0.003566, 'NTAP', 0.003566, 'CELG', 0.003566, 'FOXA', 0.003567, 'PAYX', 0.003567, 'AAPL', 0.003567, 'WFM', 0.003567, 'ADSK', 0.003567, 'SBUX', 0.003567, 'STX', 0.003567, 'AKAM', 0.003567, 'DISH', 0.003567, 'AVGO', 0.003567, 'XLNX', 0.003567, 'AAL', 0.003567, 'FAST', 0.003567, 'TMUS', 0.003567, 'LRCX', 0.003567, 'NCLH', 0.003567, 'MCHP', 0.003567, 'MSFT', 0.003567, 'MU', 0.003567, 'NFLX', 0.003567, 'NVDA', 0.003567, 'PCAR', 0.003567, 'SIRI', 0.003567, 'MAR', 0.003567, 'TXN', 0.003567, 'ROST', 0.003567, 'CMCSA', 0.003567, 'ADI', 0.003567, 'ADP', 0.003567, 'DISCA', 0.003567, 'AMAT', 0.003567, 'WDC', 0.003567, 'CSX', 0.003567, 'WBA', 0.003567, 'GOOGL', 0.003622, 'COST', 0.003678, 'INTC', 0.003712, 'CTSH', 0.003908, 'BBBY', 0.004027, 'TRIP', 0.004881, 'MAT', 0.004956, 'ATVI', 0.005121, 'LBTYK', 0.00523, 'CERN', 0.00524, 'CTRP', 0.005283, 'ALXN', 0.00536, 'VOD', 0.005369, 'VRTX', 0.005433, 'LBTYA', 0.005445, 'MXIM', 0.00554, 'BIIB', 0.005554, 'EBAY', 0.005555, 'BIDU', 0.005605, 'FB', 0.005654, 'VIAB', 0.005685, 'GILD', 0.005695, 'AMGN', 0.005716, 'MYL', 0.005737, 'YHOO', 0.006166, 'KHC', 0.006555, 'AMZN', 0.006605, 'CSCO', 0.007836, 'ESRX', 0.010614, 'SWKS', 0.012777, 'MDLZ', 0.017898, 'CA', 0.02198, 'EXPE', 0.024373, 'QVCA', 0.026462, 'EA', 0.027808, 'TSLA', 0.043082, 'ADBE', 0.043829, 'JD', 0.071079, 'SYMC', 0.081596, 'PYPL', 0.087612, 'DLTR', 0.119737, 'TSCO', 0.122887\\
    & RETAIN & 'DLTR', 0.000866, 'QVCA', 0.001128, 'TSLA', 0.00119, 'PYPL', 0.00128, 'EA', 0.001439, 'EXPE', 0.001502, 'CA', 0.001713, 'TSCO', 0.001737, 'SYMC', 0.002334, 'ADBE', 0.00252, 'JD', 0.002607, 'AMZN', 0.003367, 'CSCO', 0.003543, 'KHC', 0.003996, 'CTSH', 0.004695, 'NXPI', 0.004865, 'EBAY', 0.004963, 'SWKS', 0.005011, 'MXIM', 0.005135, 'MYL', 0.005541, 'COST', 0.006052, 'BIDU', 0.006534, 'GOOGL', 0.006906, 'INTC', 0.007153, 'GILD', 0.007212, 'ESRX', 0.007512, 'NTAP', 0.007695, 'QCOM', 0.008037, 'CELG', 0.008168, 'MDLZ', 0.008829, 'AMGN', 0.008998, 'FOX', 0.009943, 'VIAB', 0.010123, 'AAPL', 0.010157, 'FB', 0.010359, 'YHOO', 0.010744, 'PAYX', 0.010899, 'BBBY', 0.01117, 'AKAM', 0.012054, 'BIIB', 0.012069, 'NFLX', 0.012266, 'ADSK', 0.012319, 'DISH', 0.012338, 'LBTYA', 0.012697, 'FOXA', 0.01282, 'MCHP', 0.012833, 'WFM', 0.012869, 'STX', 0.012887, 'VRTX', 0.013318, 'SBUX', 0.013458, 'VOD', 0.013798, 'ALXN', 0.013878, 'CTRP', 0.013963, 'SIRI', 0.01475, 'CERN', 0.014777, 'LBTYK', 0.014799, 'ATVI', 0.015651, 'AVGO', 0.016382, 'CMCSA', 0.016531, 'TXN', 0.016977, 'LRCX', 0.017131, 'AMAT', 0.017378, 'ROST', 0.017399, 'MU', 0.018045, 'TRIP', 0.018236, 'MAT', 0.018297, 'Auto-regressive', 0.018626, 'WDC', 0.019083, 'DISCA', 0.019233, 'FAST', 0.019392, 'CSX', 0.019734, 'WBA', 0.019984, 'AAL', 0.021188, 'ADI', 0.021215, 'NCLH', 0.022932, 'NVDA', 0.022994, 'TMUS', 0.024187, 'MSFT', 0.026354, 'ADP', 0.028515, 'MAR', 0.028783, 'PCAR', 0.029459, 'XLNX', 0.03248\\
    \hline
\end{tabular}
}
\label{tab:fullrank_nasdaq_dual_retain}
\end{table*}




\begin{table*}[tbhp!]
  \centering
  \caption{Variable importance ranking on PLANT and SML datasets.}
  \small
\resizebox{1.0\textwidth}{!}
{
  \begin{tabular}{|m{1.2cm}|l|p{10cm}|}
    \hline
    Dataset & Method & Rank of variables according to importance  \\
    \hline
    \\[-1em]
    \multirow{3}{*}{PLANT} & IMV-LSTM & 'Dew-point', 0.040899094, 'Wind-bearing', 0.04476319, 'Pressure', 0.06180005, 'P-temperature', 0.07244386, 'Auto-regressive', 0.1083069, 'Temperature', 0.11868146, 'Irradiance', 0.12043289, 'Humidity', 0.13192631, 'Cloud-cover', 0.14283147, 'Wind-speed', 0.15791483 \\
    & DUAL &  
    'Irradiance', 0.06128826, 'Dew-point', 0.066655099, 'Temperature', 0.071131147, 'Wind-speed', 0.094427079, 'Wind-bearing', 0.106529392, 'P-temperature', 0.115000054, 'Pressure', 0.115962856, 'Cloud cover', 0.144996881, 'Humidity', 0.224009201\\
    & RETAIN & 'Dewpoint', 0.031317, 'Temperature', 0.037989, 'Wind-bearing', 0.044226, 'Wind-speed', 0.052027, 'P-temperature', 0.053034, 'Cloud cover', 0.138427, 'Irradiance', 0.142899, 'Auto-regressive', 0.143269, 'Humidity', 0.172893, 'Pressure', 0.183919
    \\
    \\[-1em]
    \hline
    \\[-1em]
    \multirow{3}{*}{SML} & IMV-LSTM & 'Outdoor temp.', 0.008530081, 'Outdoor humidity', 0.0120737655, 'Sun irradiance', 0.012943255, 'CO2 dining', 0.01563413, 'Sunlight in south', 0.01569774, 'Sun dusk', 0.015769556, 'Wind', 0.015868865], 'Forecast temp.', 0.015990425, 'Sunlight in west', 0.01609429, 'Lighting dining', 0.016338758, 'Humid. dining', 0.016379833, 'Sunlight in east', 0.016386982, 'Auto-regressive', 0.016530316, 'Temp. dining', 0.01663947, 'Lighting room', 0.18322693, 'CO2 room', 0.26715645, 'Humid. room', 0.33873916 \\
     & DUAL & 'Humid. room', 0.059424, 'Humid. dining', 0.059656, 'Outdoor humidity', 0.059803, 'Temp. dining', 0.059878, 'Sun dusk', 0.060408, 'Sunlight in south', 0.061626, 'Wind', 0.061629, 'Sunlight in east', 0.062792, 'Lighting room', 0.063381, 'Forecast temp.', 0.063503, 'Sunlight in west', 0.063832, 'CO2 room', 0.064149, 'CO2 dining', 0.064383, 'Sun irradiance', 0.064703, 'Lighting dining', 0.0651, 'Outdoor temp.', 0.065733
     \\
    & RETAIN & 'Humid. dining', 0.012169, 'Humid. room', 0.014563, 'Sunlight in south', 0.018446, 'Lighting room', 0.018732, 'Outdoor humidity', 0.019388, 'Sunlight in west', 0.02219, 'Sunlight in east', 0.036744, 'CO2 room', 0.036864, 'CO2 dining', 0.037174, 'Sun dusk', 0.040011, 'Sun irradiance', 0.04075, 'Wind', 0.041191, 'Lighting dining', 0.054166, 'Forecast temp.', 0.133079, 'Outdoor temp.', 0.144314, 'Auto-regressive', 0.164673, 'Temp. dining', 0.165543
    \\
    \hline
\end{tabular}
}
\label{tab:fullrank_sml_plant}
\end{table*}

\subsection{Discussion}

In this part, we summarize the insights from the experiments.

\paragraph{Prediction performance} 
For multi-variable data, capturing individual variable's behaviors and their interaction is the key for both prediction and interpretation.  
Conventional hidden states in standard LSTMs consume the data from all input variables at each 
step, while our IMV-LSTM family decomposes the hidden states by defining variable data flows for each hidden state element.

In the experiments, IMV-Full and IMV-Tensor outperform baselines using the traditional hidden states.
Multi-variable data potentially carries different dynamics. Conventional hidden states mix the data of all input variables, thereby failing to explicitly capture individual dynamics.
In the multi-variable setting, these opaque hidden states are a burden to both prediction and interpretation.

On the contrary, IMV-Tensor models individual variables and then uses mixture attention to capture the variable interaction by variable-wise hidden states.
It achieves superior prediction performance and enables the interpretability on both temporal and variable levels. 

\paragraph{Effectiveness of importance values}
For LSTM networks on multi-variable data, importance values inherently learned by the network are more suitable for retaining useful variables for predicting. 

By choosing the variables based on the learned importance value, IMV-LSTM family mostly retains the prediction performance and presents lower prediction errors on two datasets. 
The importance value in IMV-LSTM is derived during the training and therefore it is able to effectively identify the variables used by IMV-LSTM to minimize the loss function, i.e. maximize the prediction accuracy.  

Pearson correlation variable selection leads to the quality loss in prediction performance, i.e. higher errors. 
Pearson correlation measures the linear correlation and pre-selecting variables based on it neglects the potential non-linear correlation in data indispensable for LSTMs to capture.

\end{document}